
\documentclass{article}

\usepackage{microtype}
\usepackage{graphicx}
\usepackage{subfigure}
\usepackage{multirow}
\usepackage{booktabs} 

\usepackage{hyperref}



\usepackage[accepted]{icml2025}

\usepackage{amsmath}
\usepackage{amssymb}
\usepackage{mathtools}
\usepackage{amsthm}
\usepackage{tabularx}
\newcommand*{\Scale}[2][4]{\scalebox{#1}{$#2$}}
\usepackage[capitalize,noabbrev]{cleveref}

\theoremstyle{plain}
\newtheorem{theorem}{Theorem}[section]
\newtheorem{proposition}[theorem]{Proposition}

\theoremstyle{definition}

\newtheorem{assumption}[theorem]{Assumption}
\theoremstyle{remark}

\usepackage[textsize=tiny]{todonotes}

\icmltitlerunning{Sequential Treatment Effect Estimation with Unmeasured Confounders}


\begin{document}

\twocolumn[
\icmltitle{Sequential Treatment Effect Estimation with Unmeasured Confounders}



\icmlsetsymbol{equal}{*}

\begin{icmlauthorlist}
\icmlauthor{Yingrong Wang}{equal,sch1}
\icmlauthor{Anpeng Wu}{equal,sch1}
\icmlauthor{Baohong Li}{sch1}
\icmlauthor{Ziyang Xiao}{sch1}
\icmlauthor{Ruoxuan Xiong}{sch2}
\icmlauthor{Qing Han}{comp}
\icmlauthor{Kun Kuang}{sch1}
\end{icmlauthorlist}

\icmlaffiliation{sch1}{College of Computer Science and Technology, Zhejiang University, Hangzhou, China}
\icmlaffiliation{comp}{Ant Group, Hangzhou, China}
\icmlaffiliation{sch2}{Department of Quantitative Theory \& Methods, Emory University, Atlanta, USA.}

\icmlcorrespondingauthor{Kun Kuang}{kunkuang@zju.edu.cn}

\icmlkeywords{Machine Learning, ICML}

\vskip 0.3in
]



\printAffiliationsAndNotice{\icmlEqualContribution} 

\begin{abstract}
This paper studies the cumulative causal effects of sequential treatments in the presence of unmeasured confounders. It is a critical issue in sequential decision-making scenarios where treatment decisions and outcomes dynamically evolve over time. Advanced causal methods apply transformer as a backbone to model such time sequences, which shows superiority in capturing long time dependence and periodic patterns via attention mechanism. However, even they control the observed confounding, these estimators still suffer from unmeasured confounders, which influence both treatment assignments and outcomes. How to adjust the latent confounding bias in sequential treatment effect estimation remains an open challenge. Therefore, we propose a novel \textbf{D}ecomposing \textbf{S}equential \textbf{I}nstrumental \textbf{V}ariable framework for \textbf{C}ounter\textbf{F}actual \textbf{R}egression (\textbf{DSIV-CFR}), relying on a common negative control assumption. Specifically, an instrumental variable (IV) is a special negative control exposure, while the previous outcome serves as a negative control outcome. This allows us to recover the IVs latent in observation variables and estimate sequential treatment effects via a generalized moment condition. We conducted experiments on $4$ datasets and achieved significant performance in one- and multi-step prediction, supported by which we can identify optimal treatments for dynamic systems. 
\end{abstract}

\section{Introduction}

Sequential decision-making is fundamental to many real-world applications, including personalized medicine~\cite{medicine1, medicine2}, financial investment~\cite{finance1,finance2}, and policy making~\cite{policy1,policy2}. These scenarios involve decisions that must dynamically adapt to evolving conditions, where the outcomes of previous decisions directly influence subsequent choices. For example, we consider a cancer patient undergoing treatment, as illustrated in Figure~\ref{fig:example}. The medical team must regularly adjust the treatment plan based on the patient’s condition (confounders) to control tumor volume (outcome)~\cite{dataset-tumor}. After surgery, if the tumor grows to $55\ \text{cm}^3$, the team may choose chemotherapy for the next stage after comprehensively considering the patient’s current health and expected treatment effects. Similar settings arise in various domains, such as financial investment, where decisions must adapt to fluctuating market conditions, and supply chain management, where strategies are adjusted dynamically based on demand and logistical feedback. Understanding the causal effects of sequential treatments is crucial for optimizing decision-making in such scenarios. However, accurately estimating these effects is complicated by the presence of unmeasured confounders that simultaneously influence treatment assignments and outcomes but remain unobserved. Unmeasured confounding introduces significant bias into causal estimates, leading to unreliable conclusions and potentially suboptimal or harmful decisions~\cite{robins1986role,kuroki2014measurement}.

\begin{figure}[t]
\begin{center}
\centerline{\includegraphics[width=0.5\textwidth]{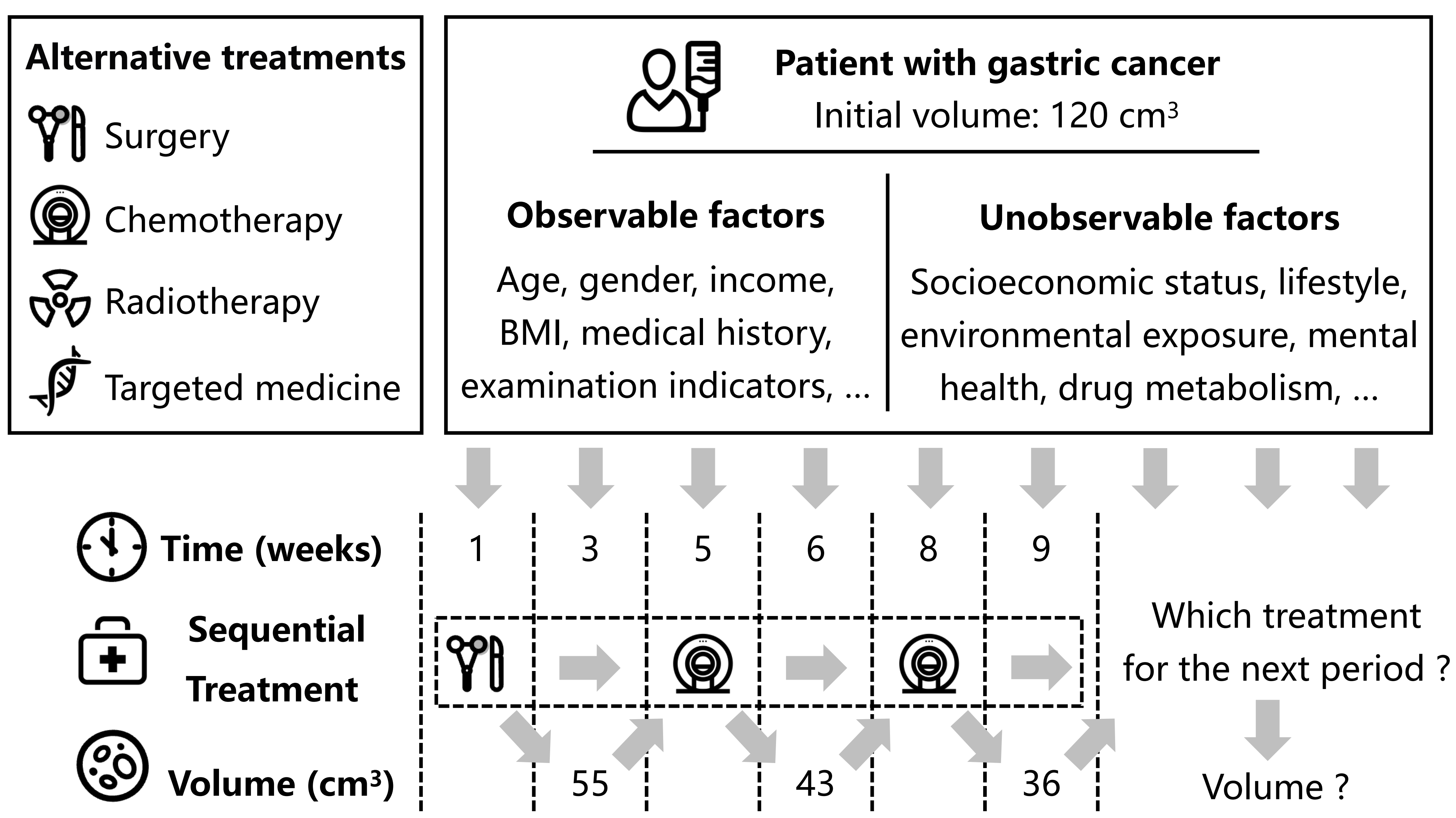}}
\vskip -0.1in
\caption{A case of counterfactual prediction and decision making on the time series data in a medical setting.}
\label{fig:example}
\end{center}
\vskip -0.4in
\end{figure}

Several studies have explored causal effect estimation from observational time series data. One significant challenge in these settings is the time dependency of variables, where all variables are influenced not only by causal relationships but also by their prior values over time. Early works leveraged the Markov property to simplify this issue, modeling the problem as a state transition chain where outcomes depend only on the immediate past~\cite{keith2021estimating,ORL}. More broadly, autoregressive models like RNNs~\cite{RNN} and LSTMs~\cite{LSTM} incorporate all historical data to estimate future outcomes, with approaches such as ACTIN~\cite{ACTIN} adapting LSTMs for causal inference in time sequences. However, these methods often face computational challenges when handling high-dimensional, long time series data. Recently, transformers~\cite{transformer} have emerged as a more powerful tool for identifying long-time dependence and periodic patterns due to its attention mechanism, which dynamically captures the relationships between different positions in a sequence and flexibly allocates their weights. However, these approaches~\cite{CT,DLTMLE} rely on the unconfoundedness assumption, which presumes that all confounders are observed. In practice, this assumption is rarely met due to the difficulty of measuring critical latent factors, leading to biased causal effect estimates and unreliable findings.

Unmeasured confounding remains a critical challenge in causal inference, particularly in sequential treatment settings. For example, as illustrated in Figure~\ref{fig:example}, in cancer care, factors like socioeconomic status, mental health, and unmonitored lifestyle habits significantly influence treatment decisions and outcomes, yet these factors are often difficult or impossible to quantify or measure. Ignoring these variables would lead to biased estimates of causal effects, distorting the true relationships between treatments and outcomes. While existing approaches, such as Time Series Deconfounder~\cite{TSD}, have made strides in addressing unmeasured confounding, they are often constrained by assumptions that may not hold in real-world scenarios, such as restrictive data requirements or limited flexibility in model design. Additionally, many of these methods face challenges with scalability and robustness, particularly when applied to high-dimensional, long time series data where the complexity of relationships between variables increases exponentially.

To address these challenges, we propose a novel framework: the Decomposing Sequential Instrumental Variable Framework for Counterfactual Regression (DSIV-CFR). This framework builds on the common negative control assumption, treating instrumental variables (IVs) as special negative control exposures and prior outcomes as negative control outcomes. These relationships allow the recovery of instrumental variables from observed covariates, enabling the framework to mitigate bias introduced by unmeasured confounders. DSIV-CFR effectively decomposes the problem into manageable tasks using generalized moment conditions, making it robust for handling high-dimensional, time-dependent data. Unlike existing methods, DSIV-CFR does not rely on the unconfoundedness assumption, ensuring reliable causal effect estimates even in the presence of latent confounders. Through extensive experiments on synthetic and real-world datasets, we demonstrate that DSIV-CFR significantly outperforms existing methods, providing a scalable, accurate, and practical solution for optimizing sequential decision-making in dynamic systems.

\begin{figure*}[t]
\begin{center}
\centerline{\includegraphics[width=0.85\textwidth]{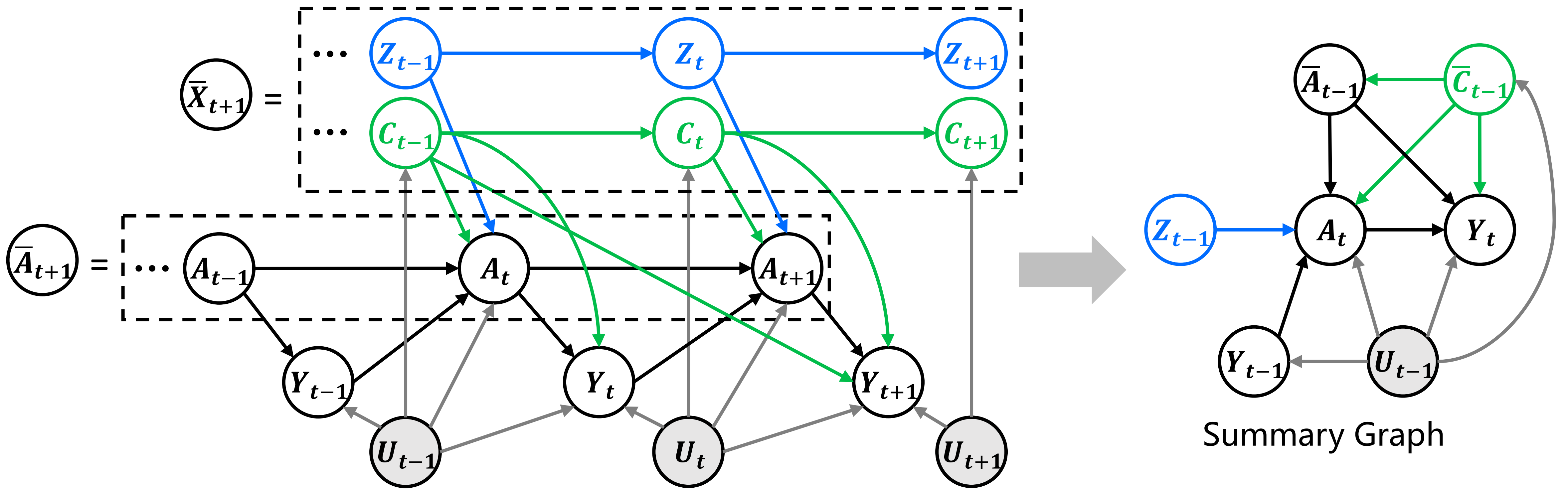}}
\vskip -0.1in
\caption{Casual graphs for illustrating the relationships between different variables from time series data. On the left, we describe the causalities in details, where the subscript associated with $t$ in each variable reflects the time dependence. Unobserved variables are marked in shadow. In this paper, we focus on the cumulative effect of treatment $(\boldsymbol{A}_1,\dots,\boldsymbol{A}_t)$ on outcome $\boldsymbol{Y}_t$. For simplicity, we give its summary graph on the right, where the historical information of all treatments and confounders are denoted as $\bar{\boldsymbol{A}}_{t-1}$ and $\bar{\boldsymbol{C}}_{t-1}$.}
\label{fig:causal-graph}
\end{center}
\vskip -0.3in
\end{figure*}

Contributions of this paper can be concluded as follows:
\vspace{-10pt}
\begin{itemize}
    \item We systemically study the problem of sequential treatment effect estimation, particularly the complex causal relationships among variables in time series data under unmeasured confounders.
    \vspace{-3pt}
    \item We proposed a novel method, DSIV-CFR, which effectively addresses the bias caused by unobserved confounders by leveraging instrumental variables and prior outcomes as negative controls, enabling accurate estimation of sequential treatment effects.
    \vspace{-3pt}
    \item Extensive experiments demonstrate the superiority of DSIV-CFR in mitigating unmeasured confounding bias, improving causal estimation accuracy, and identifying optimal treatment strategies in dynamic systems.
\end{itemize}

\section{Related Work}\label{sec:related}
\textbf{Treatment effect estimation for time series data.} There is a comprehensive survey~\cite{time-series-survey} that explores some basic problems about causal inference for time series analysis. Variations of treatments in these scenarios include time-invariant and time-varying ones. Time-invariant treatment refers to an intervention that occurs at a specific time and remains unchanged afterward. To estimate the effect of such treatment, the difference-in-difference method~\cite{did-orley,did-susan} is one of the mostly used tools in economics, which depend on parallel trends assumption. Estimation of effects for time-varying treatment is most relevant to this paper. One solution to model the time series data is to utilize the Markov assumption, where the complex temporal issue can be simplified as a problem of state transitions between adjacent timestamps~\cite{keith2021estimating,ORL}, i.e. the data at time $t$ is influenced only by the data at time $t-1$. The other way is to apply autoregressive models to connect causal models at a single timestamp, where all historical information serves as input to affect the current time point $t$. For example, Time Series Deconfounder~\cite{TSD} uses RNN~\cite{RNN} can be regarded as an extension of Deconfounder~\cite{deconfounder} to address the temporal sequences. LSTM~\cite{LSTM} is also applied to extend the ideas of CFR~\cite{CFR} to the estimation of causal effects in time series data~\cite{ACTIN}. In recent works~\cite{CT,DLTMLE}, transformer~\cite{transformer} serves as a more powerful tool to model complex time series when estimating the treatment effect.

\textbf{Treatment effect estimation with unobserved confounders.} Instrumental variable (IV) is a powerful tool to address latent confounding bias. It is independent of unmeasured confounders, serves as a cause of treatment, but has no influence on the outcome. However, standard IV methods require a predefined, strong, and valid IV, which is often difficult to find in real-world applications. Therefore, many researchers have studied the generation or synthesis of IV to address this problem. For example, ModelIV~\cite{modelIV} identifies valid IVs by selecting those in the tightest cluster of estimation points, assuming that these candidates yield approximately causal effects, thus relaxing the requirement for more than half of the candidates to be valid. AutoIV~\cite{autoIV} generates IV representations by learning a disentangled representation from the observational data on the basis of independence conditions. However, sometimes the representations learned by this methodology may be a reverse IV rather than a true IV. That is, there is a causal link from the treatment variable that leads to the so-called IV. Negative control is another tool that is widely used to address the issue of unmeasured confounding. Negative control refers to observational variables that could serve as proxies to indirectly indicate the effects of unmeasured confounders~\cite{miao2018identifying}. A cross-moment approach~\cite{yaroslav2023cross} is introduced, which also leverages the idea of negative controls to mitigate the impact of unobserved confounders. Moreover, negative controls can be naturally extended to time series data analysis~\cite{miao2024bridge}. In this context, the treatment applied at the next moment can be regarded as a negative control exposure, while the outcome at the previous timestamp serves as a negative control outcome.

\section{Problem Setup}\label{sec:formulation}

In this paper, we aim to study the cumulative treatment effects of sequential treatments under unmeasured confounders. For observational data, as shown in Figure~\ref{fig:causal-graph}, we can observe $\mathcal{D} = \{\boldsymbol{A}_{t},\boldsymbol{X}_{t},\boldsymbol{Y}_{t}\}_{t=1}^{T}$ collected over $T$ time steps. Among the observational variables, $\boldsymbol{A}_{t}=\{\boldsymbol{A}_{t}^i\}_{i=1}^n$ represents the treatments applied at timestamp $t$ on $n$ units. Generally in this paper, the superscript $i$ denotes the index of each unit, while the subscript $t$ denotes the timestamp. $\boldsymbol{X}_{t}$ and $\boldsymbol{Y}_{t}$ are the observed covariates and factual outcomes. In the presence of unmeasured confounders, there exists a set of latent variables $\boldsymbol{U}_{t}$, which are missing and may simultaneously affect both the treatments $\boldsymbol{A}_t$ and the outcomes $\boldsymbol{Y}_{t-1}$ and $\boldsymbol{Y}_t$. These unobserved confounders introduce dependencies that confuse causal inference. In this paper, we use $\bar{\boldsymbol{A}}_{t-1}=(\boldsymbol{A}_1,\dots,\boldsymbol{A}_{t-1})$ to denote the history of the treatment variable $\boldsymbol{A}$ up to time $t-1$. Similarly, we define $\bar{\boldsymbol{X}}_{t-1}$,$\bar{\boldsymbol{Y}}_{t-1}$, and $\bar{\boldsymbol{U}}_{t-1}$. For simplicity, we use $\bar{\boldsymbol{H}}_{t-1}=\{\bar{\boldsymbol{X}}_{t-1},\bar{\boldsymbol{A}}_{t-1},\bar{\boldsymbol{Y}}_{t-1}\}$ to denote observed history before treatment $\boldsymbol{A}_t$ is applied. In the sequential treatment effect problem, the sequential value on $\{\bar{\boldsymbol{A}}_{t-1}, \bar{\boldsymbol{X}}_{t-1}, \bar{\boldsymbol{U}}_{t-1}\}$ act as confounders that jointly influence both the current treatment $\boldsymbol{A}_t$ and the outcome $\boldsymbol{Y}_t$.

One classical solution to reduce bias from $\boldsymbol{U}_{t-1}$ is to use a predefined IV, leveraging its exogeneity to infer causal effects. However, it is hard to find such IVs in reality. Therefore, we aim to recover IVs from observations $\boldsymbol{X}_{t-1}$ by decomposing them into the following two parts\footnote{In time series analysis, $\boldsymbol{Z}$ and $\boldsymbol{C}$ are considered two independent sources of $\boldsymbol{X}$, with the mapping relationships assumed to be static, meaning they remain unchanged over time.}: (1) confounders $\boldsymbol{C}_{t-1}$ that $\boldsymbol{A}_{t}\leftarrow\boldsymbol{C}_{t-1}\rightarrow\boldsymbol{Y}_{t}$. (2) instruments $\boldsymbol{Z}_{t-1}$ that affect $\boldsymbol{A}_t$ but are not influenced by $\boldsymbol{U}_{t-1}$. Under the proposed time-series framework, we reconstruct them as a summary graph, which is shown in the right of Figure~\ref{fig:causal-graph}. The components of this summary graph include newly applied treatment $\boldsymbol{A}_t$, outcome of interest $\boldsymbol{Y}_t$, instruments $\boldsymbol{Z}_{t-1}$, confounders $\{\bar{\boldsymbol{A}}_{t-1},\bar{\boldsymbol{C}}_{t-1},\boldsymbol{U}_{t-1}\}$, and auxiliary information $\boldsymbol{Y}_{t-1}$. Our task is to predict the counterfactual outcome for the choice of optimal treatment given $\bar{\boldsymbol{H}}_{t-1}$:
\vspace{-2pt}
\begin{equation}
    \mathbb{E}[\boldsymbol{Y}_{t}(\boldsymbol{a}_{t})|\bar{\boldsymbol{H}}_{t-1}].
\end{equation}
\vspace{-22pt}

We make the following assumptions throughout this paper.


\begin{assumption}
\textbf{(Consistency).} If a unit receives a treatment, i.e. $\boldsymbol{A}_{t}=\boldsymbol{a}_{t}$, then the observed
outcomes $\boldsymbol{Y}_{t}$ are identical to the potential outcomes $\boldsymbol{Y}_{t}(\boldsymbol{a}_{t})$.
\label{ass:consistency}
\end{assumption}

\vspace{-2pt}
\begin{assumption}
\textbf{(Overlap).} For any intervention, there is always a positive probability of receiving it given confounders, i.e.
$\sup_{\boldsymbol{a}_t\in\boldsymbol{A}_t}|\frac{\mathbb{P}(\boldsymbol{A}_t=\boldsymbol{a}_t|\bar{\boldsymbol{C}}_{t-1})}{\mathbb{P}(\boldsymbol{A}_t=\boldsymbol{a}_t|\bar{\boldsymbol{C}}_{t-1},\boldsymbol{U}_{t-1})}|<\infty$.
\label{ass:overlap}
\end{assumption}

\vspace{-2pt}
\begin{assumption}
\textbf{(Sequential Latent Ignorability).} By controlling all confounders, including observed ones $\boldsymbol{C}_{t-1}$ and unobserved ones $\boldsymbol{U}_{t-1}$, the effect of treatment $\boldsymbol{A}_{t}$ on the outcome $\boldsymbol{Y}_{t}$ could be estimated without bias. Formally, $\boldsymbol{Y}_{t}(\boldsymbol{a}_{t}) \perp\!\!\!\perp \boldsymbol{A}_{t}\ |\ \{\bar{\boldsymbol{C}}_{t-1},\boldsymbol{U}_{t-1}\}$ for all $\boldsymbol{a}_{t}$.
\label{ass:latent-ignore}
\end{assumption}

\vspace{-2pt}
\begin{assumption}
\textbf{(Additive Noise Model).} Error terms are independent of each other, that is, the unobserved confounders from the previous time point do not affect the next ones. Formally, $\boldsymbol{Y}_{t-1}\not\rightarrow\boldsymbol{Y}_t$ and $\boldsymbol{U}_{t-1}\not\rightarrow\boldsymbol{U}_t$. To estimate the potential outcome, we also assume $\mathbb{E}[\boldsymbol{U}|\boldsymbol{X}]=0$.
\label{ass:additive}
\end{assumption}

\vspace{-2pt}
\begin{assumption}
\textbf{(Time-invariant Relationship).} The function of the treatment effect, denoted as $h:\boldsymbol{X}\times\boldsymbol{A}\rightarrow\boldsymbol{Y}$, does not change over time, although its value fluctuates due to the confounding heterogeneity of $\boldsymbol{X}$.
\label{ass:time-invariant}
\end{assumption}

\begin{figure*}[t]
\begin{center}
\centerline{\includegraphics[width=1\textwidth]{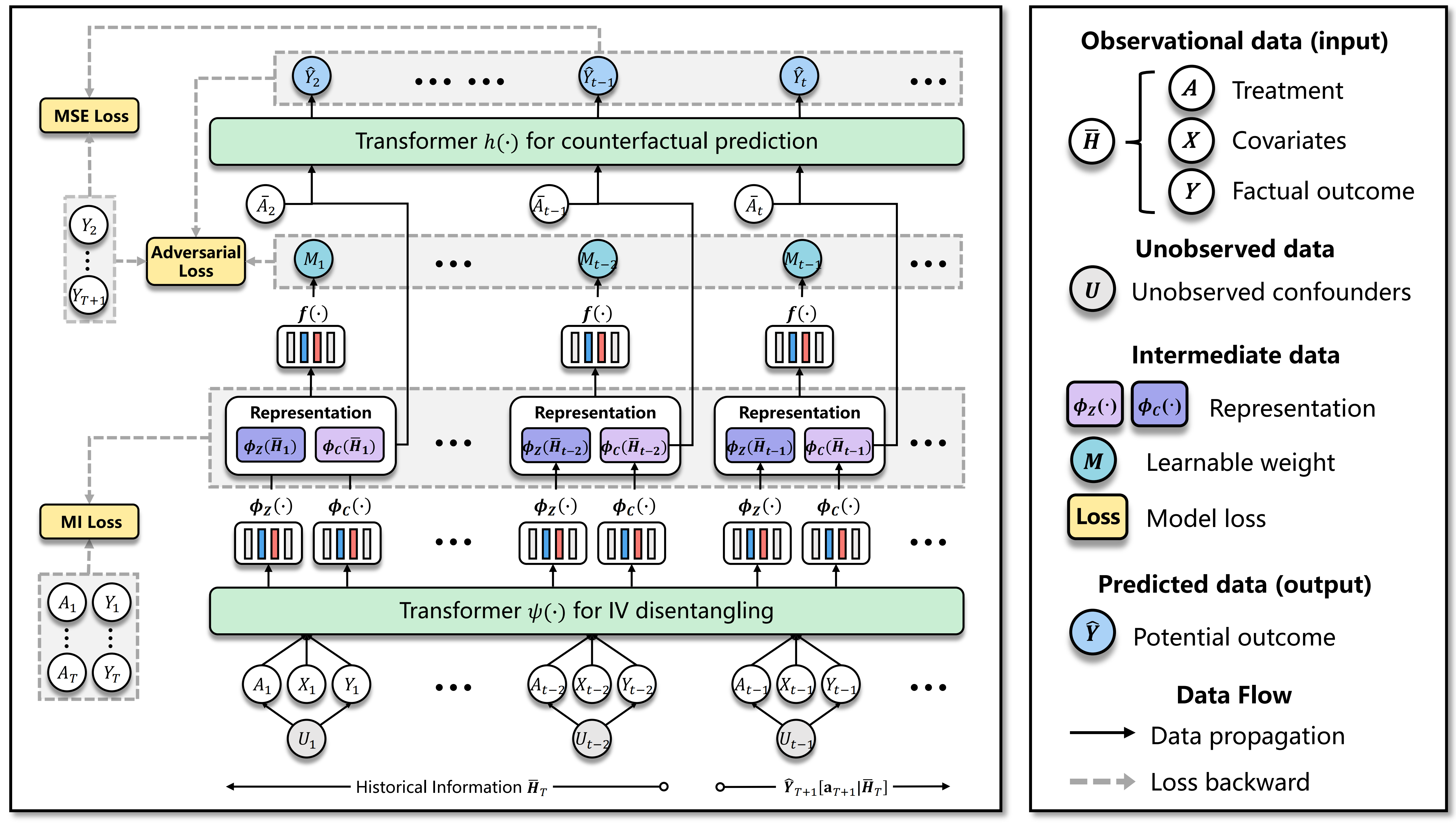}}
\vskip -0.05in
\caption{Overview of our model DSIV-CFR. Some explanations have been displayed in the right panel. Historical observations $\bar{\boldsymbol{H}}$ are input into the first transformer $\psi(\cdot)$ to learn the representation of IVs $\phi_Z$ and confounders $\phi_C$, which is optimized by the mutual information (MI) loss. The second transformer $h(\cdot)$ is trained as a backbone with the objective of accurately predicting future potential outcomes $\boldsymbol{Y}$, measured by the MSE loss. In addition, an adversarial loss derived from IVs and confounders is considered to build up a GMM framework, requiring another bridge function $f(\cdot)$ to learn a weight $\boldsymbol{M}$ so as to train against $h(\cdot)$.}
\label{fig:model}
\end{center}
\vskip -0.3in
\end{figure*}

\vspace{-2pt}
\begin{proposition}[\textbf{IV Decomposition}]\label{pro:decompose}
Following the IV conditions, if all variables are observable, we can decompose the instrument ${\boldsymbol{Z}}_{t-1}$ from the observed covariates $\bar{\boldsymbol{X}}_{t-1}$ as follows: ${\boldsymbol{Z}}_{t-1} \perp\!\!\!\perp \bar{\boldsymbol{C}}_{t-1}$ and ${\boldsymbol{Z}}_{t-1} \perp\!\!\!\perp \boldsymbol{Y}_t \mid \{\boldsymbol{A}_t, \bar{\boldsymbol{A}}_{t-1}, \bar{\boldsymbol{C}}_{t-1}, {\boldsymbol{U}}_{t-1}\}$.
\end{proposition}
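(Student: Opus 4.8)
The plan is to establish both statements as d-separation relations read off the summary causal graph (right panel of Figure~\ref{fig:causal-graph}), invoking the global Markov property. First I would fix the edge structure implied by the IV conditions and the decomposition: by relevance and exclusion, $\boldsymbol{Z}_{t-1}$ has its single outgoing edge into $\boldsymbol{A}_t$ and no edge into $\boldsymbol{Y}_t$; by the footnote's assumption that $\boldsymbol{Z}$ and $\boldsymbol{C}$ are independent sources of $\boldsymbol{X}$ under a static map, $\boldsymbol{Z}_{t-1}$ shares no ancestor with $\{\bar{\boldsymbol{C}}_{t-1},\boldsymbol{U}_{t-1}\}$; and Assumption~\ref{ass:additive} deletes the edges $\boldsymbol{Y}_{t-1}\to\boldsymbol{Y}_t$ and $\boldsymbol{U}_{t-1}\to\boldsymbol{U}_t$, so the only parents of $\boldsymbol{A}_t$ are $\{\boldsymbol{Z}_{t-1},\bar{\boldsymbol{A}}_{t-1},\bar{\boldsymbol{C}}_{t-1},\boldsymbol{U}_{t-1}\}$ and the only parents of $\boldsymbol{Y}_t$ are $\{\boldsymbol{A}_t,\bar{\boldsymbol{A}}_{t-1},\bar{\boldsymbol{C}}_{t-1},\boldsymbol{U}_{t-1}\}$.

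For $\boldsymbol{Z}_{t-1}\perp\!\!\!\perp\bar{\boldsymbol{C}}_{t-1}$ I would argue directly from source independence: because $\boldsymbol{Z}$ and $\boldsymbol{C}$ are generated as mutually independent sources and the generating map is time-invariant, the instrument at time $t-1$ is independent of the entire confounder process. In graph terms there is no common ancestor of $\boldsymbol{Z}_{t-1}$ and any $\boldsymbol{C}_s$ with $s\le t-1$, and no collider is opened on the empty conditioning set, so every path is blocked and marginal independence follows.

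The conditional statement $\boldsymbol{Z}_{t-1}\perp\!\!\!\perp\boldsymbol{Y}_t\mid S$ with $S=\{\boldsymbol{A}_t,\bar{\boldsymbol{A}}_{t-1},\bar{\boldsymbol{C}}_{t-1},\boldsymbol{U}_{t-1}\}$ is the core of the argument. The observation I would exploit is that $S$ contains $\boldsymbol{A}_t$ together with every other parent of $\boldsymbol{A}_t$. Since $\boldsymbol{Z}_{t-1}$'s only edge leaves into $\boldsymbol{A}_t$, every path from $\boldsymbol{Z}_{t-1}$ to $\boldsymbol{Y}_t$ begins $\boldsymbol{Z}_{t-1}\to\boldsymbol{A}_t$. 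If the path then continues $\boldsymbol{A}_t\to\boldsymbol{Y}_t$, the chain node $\boldsymbol{A}_t\in S$ blocks it. Otherwise it continues $\boldsymbol{A}_t\leftarrow W$, making $\boldsymbol{A}_t$ a collider; although conditioning on $\boldsymbol{A}_t\in S$ activates this collider, the next node $W\in\{\bar{\boldsymbol{A}}_{t-1},\bar{\boldsymbol{C}}_{t-1},\boldsymbol{U}_{t-1}\}$ is a non-collider that also lies in $S$, so the path is re-blocked at $W$. Hence every path is blocked and d-separation gives the claim.

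The main obstacle is precisely this collider subtlety. Because $\boldsymbol{A}_t$ is a common child of the instrument and of the confounders $\bar{\boldsymbol{C}}_{t-1},\boldsymbol{U}_{t-1}$, conditioning on it (as required to block the causal channel) reopens the backdoor paths $\boldsymbol{Z}_{t-1}\to\boldsymbol{A}_t\leftarrow\bar{\boldsymbol{C}}_{t-1}\to\boldsymbol{Y}_t$ and $\boldsymbol{Z}_{t-1}\to\boldsymbol{A}_t\leftarrow\boldsymbol{U}_{t-1}\to\boldsymbol{Y}_t$. The delicate step is to verify that the conditioning set was chosen to contain exactly the nodes $\bar{\boldsymbol{C}}_{t-1}$ and $\boldsymbol{U}_{t-1}$ needed to re-block them, and, using Assumption~\ref{ass:additive} to rule out cross-time detours through $\boldsymbol{Y}_{t-1}$ or a persisting $\boldsymbol{U}$ process, that no alternative temporal path reaches $\boldsymbol{Y}_t$ while escaping $S$.
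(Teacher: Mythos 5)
Your d-separation argument is correct and matches the paper's (implicit) reasoning: the paper states Proposition~\ref{pro:decompose} without a formal proof, treating the two independences as a direct transcription of the standard IV conditions (unconfounded instrument plus exclusion restriction) onto the summary graph of Figure~\ref{fig:causal-graph}, which is exactly what your path-blocking analysis verifies. Your explicit treatment of the collider at $\boldsymbol{A}_t$ being re-blocked by the remaining parents in the conditioning set, and of the $\boldsymbol{Y}_{t-1}$ detour ruled out via Assumption~\ref{ass:additive}, simply makes rigorous what the paper leaves to the reader.
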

\vspace{-6pt}

These conditions imply that the instrument variable ${\boldsymbol{Z}}_{t-1}$ is independent of the outcome $\boldsymbol{Y}_t$ given the appropriate set of covariates, including the unmeasured confounders $\boldsymbol{U}_{t-1}$. However, in practice, we cannot directly identify the instrument $\boldsymbol{Z}_{t-1}$ due to $\boldsymbol{U}_{t-1}$. As a result, instrumental variable identification becomes challenging and would require additional assumptions. Motivated by the idea of negative control~\cite{miao2018identifying,miao2024bridge}, we utilize a pair of proxy variables, including negative control exposure and negative control outcome, to help identify causal effects.

\begin{assumption}
\textbf{(Negative Control).} 
Negative controls comprise negative control exposures (NCE) and negative control outcomes (NCO).
NCE cannot directly affect the outcome $\boldsymbol{Y}_t$, and neither NCE nor treatment $\boldsymbol{A}_t$ can affect NCO. The effect of unmeasured confounders $\boldsymbol{U}_{t-1}$ on NCE and $\boldsymbol{A}_t$, as well as on NCO and $\boldsymbol{Y}_t$, is proportional.
\label{ass:negative}
\end{assumption}
\vspace{-3pt}
Generally, we can treat the instrument $\boldsymbol{Z}_{t-1}$ as a special case of NCE, while previous outcome $\boldsymbol{Y}_{t-1}$ can also be regarded as a special case of NCO.

\begin{theorem}[\textbf{IV Identification}]\label{theorem:IV}
    Based on the negative control assumptions, given $\bar{\boldsymbol{C}}_{t-1}$, we can decompose the instrument ${\boldsymbol{Z}}_{t-1}$ from the observed covariates $\bar{\boldsymbol{X}}_{t-1}$ as follows: ${\boldsymbol{Z}}_{t-1} \perp\!\!\!\perp \bar{\boldsymbol{C}}_{t-1}$ and ${\boldsymbol{Z}}_{t-1} \perp\!\!\!\perp \boldsymbol{Y}_t \mid \{\boldsymbol{A}_t, \bar{\boldsymbol{A}}_{t-1}, \bar{\boldsymbol{C}}_{t-1}, {\boldsymbol{Y}}_{t-1}\}$.
\end{theorem}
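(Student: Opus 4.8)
The plan is to reduce \Cref{theorem:IV} to \Cref{pro:decompose} by showing that the observable negative control outcome $\boldsymbol{Y}_{t-1}$ can replace the unobserved confounder $\boldsymbol{U}_{t-1}$ in the conditioning set without breaking either independence statement. The first claim, $\boldsymbol{Z}_{t-1} \perp\!\!\!\perp \bar{\boldsymbol{C}}_{t-1}$, is inherited verbatim from \Cref{pro:decompose}: by the decomposition footnote $\boldsymbol{Z}$ and $\boldsymbol{C}$ are two independent static sources of $\boldsymbol{X}$, so their mutual independence does not depend on whether we condition on $\boldsymbol{U}_{t-1}$ or $\boldsymbol{Y}_{t-1}$, and no further argument is needed. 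All the work is in the second claim.

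For the second claim I would first isolate the single path that the change of conditioning set must handle. Starting from $\boldsymbol{Z}_{t-1} \perp\!\!\!\perp \boldsymbol{Y}_t \mid \{\boldsymbol{A}_t, \bar{\boldsymbol{A}}_{t-1}, \bar{\boldsymbol{C}}_{t-1}, \boldsymbol{U}_{t-1}\}$, the only reason $\boldsymbol{U}_{t-1}$ is needed is that conditioning on the collider $\boldsymbol{A}_t$ activates the path $\boldsymbol{Z}_{t-1} \rightarrow \boldsymbol{A}_t \leftarrow \boldsymbol{U}_{t-1} \rightarrow \boldsymbol{Y}_t$. Every other route from $\boldsymbol{Z}_{t-1}$ to $\boldsymbol{Y}_t$ is already blocked, since $\boldsymbol{Z}_{t-1}$ has no direct edge into $\boldsymbol{Y}_t$ (the NCE clause of \Cref{ass:negative}) and is exogenous to $\boldsymbol{U}_{t-1}$ by the IV construction. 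It therefore suffices to show that conditioning on $\boldsymbol{Y}_{t-1}$, together with $\bar{\boldsymbol{A}}_{t-1}$ and $\bar{\boldsymbol{C}}_{t-1}$, removes the confounding signal that $\boldsymbol{U}_{t-1}$ transmits to $\boldsymbol{Y}_t$ along this activated path.

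The substitution is established using \Cref{ass:additive} and \Cref{ass:negative}. Under the additive noise model I would write
\begin{align}
\boldsymbol{Y}_{t-1} &= g_{t-1} + \gamma\,\boldsymbol{U}_{t-1} + \boldsymbol{\varepsilon}_{t-1}, \\
\boldsymbol{Y}_{t} &= h_{t} + \lambda\,\boldsymbol{U}_{t-1} + \boldsymbol{\varepsilon}_{t},
\end{align}
where $g_{t-1}=g(\bar{\boldsymbol{A}}_{t-1},\bar{\boldsymbol{C}}_{t-1})$ and $h_{t}=h(\boldsymbol{A}_t,\bar{\boldsymbol{A}}_{t-1},\bar{\boldsymbol{C}}_{t-1})$. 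The NCO clause of \Cref{ass:negative} guarantees that $\boldsymbol{Y}_{t-1}$ depends on neither $\boldsymbol{A}_t$ nor $\boldsymbol{Z}_{t-1}$, and the proportionality clause guarantees that $\gamma$ and $\lambda$ share a fixed ratio. Given the conditioning variables $\bar{\boldsymbol{A}}_{t-1},\bar{\boldsymbol{C}}_{t-1}$ the term $g_{t-1}$ is fixed, so fixing $\boldsymbol{Y}_{t-1}$ pins down the confounding component $\gamma\,\boldsymbol{U}_{t-1}$ up to the exogenous noise $\boldsymbol{\varepsilon}_{t-1}$; since $\lambda\,\boldsymbol{U}_{t-1}$ is the only residual channel carrying $\boldsymbol{U}_{t-1}$ into $\boldsymbol{Y}_t$ and it is a fixed multiple of $\gamma\,\boldsymbol{U}_{t-1}$, the dependence of $\boldsymbol{Y}_t$ on $\boldsymbol{Z}_{t-1}$ running through $\boldsymbol{U}_{t-1}$ is absorbed once $\boldsymbol{Y}_{t-1}$ is held fixed. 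Combined with $\mathbb{E}[\boldsymbol{U}\mid\boldsymbol{X}]=0$, this yields the conditional independence $\boldsymbol{Z}_{t-1} \perp\!\!\!\perp \boldsymbol{Y}_t \mid \{\boldsymbol{A}_t, \bar{\boldsymbol{A}}_{t-1}, \bar{\boldsymbol{C}}_{t-1}, \boldsymbol{Y}_{t-1}\}$ required by the theorem.

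I expect the substitution step to be the main obstacle, because purely graphically it is \emph{false}: $\boldsymbol{Y}_{t-1}$ is a noisy descendant of $\boldsymbol{U}_{t-1}$, so conditioning on it does not $d$-separate paths through $\boldsymbol{U}_{t-1}$, and the residual $\boldsymbol{\varepsilon}_{t-1}$ means $\boldsymbol{Y}_{t-1}$ does not determine $\boldsymbol{U}_{t-1}$ exactly. The argument goes through only because the proportional-effect and additive-noise assumptions supply an outcome bridge function in the sense of negative-control identification~\cite{miao2018identifying,miao2024bridge}. Consequently, care is needed to state the conclusion at the level of the moment/mean condition actually used later for the generalized moment objective, rather than claiming full statistical independence, and to verify that the bridge is well defined under the completeness implicitly required by \Cref{ass:overlap}.
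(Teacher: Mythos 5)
Your proof takes essentially the same route as the paper's: both write $\boldsymbol{Y}_{t-1}$ and $\boldsymbol{Y}_t$ as additive in a common (proportional) image of $\boldsymbol{U}_{t-1}$ using Assumption~\ref{ass:additive} and Assumption~\ref{ass:negative}, and then substitute the observable $\boldsymbol{Y}_{t-1}$ for $\epsilon(\boldsymbol{U}_{t-1})$ in the conditioning set of Proposition~\ref{pro:decompose}. Your closing caveat---that the residual noise in $\boldsymbol{Y}_{t-1}$ (the paper's $\epsilon(\boldsymbol{U}_{t-2})$ term) means the substitution strictly yields only a moment condition rather than full conditional independence---is a gap the paper's own proof silently shares, so it does not distinguish your argument from theirs.
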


\begin{proof}
    Under Assumptions~\ref{ass:consistency}-\ref{ass:time-invariant} and \ref{ass:negative}, we assume that the function of $\boldsymbol{U}_{t-1}$ on $\boldsymbol{Y}_{t-1}$ and $\boldsymbol{Y}_{t}$ is the same, and the outcome feedback function $h$ can be reformulated as:
\begin{equation}\label{eq:outcome}
\begin{split}
    \boldsymbol{Y}_t & = h(\boldsymbol{A}_t, \bar{\boldsymbol{A}}_{t-1}, \bar{\boldsymbol{C}}_{t-1}) + \epsilon(\boldsymbol{U}_{t}) + \epsilon(\boldsymbol{U}_{t-1}) 
    \\
    \boldsymbol{Y}_{t-1} & = \Scale[0.9]{h(\boldsymbol{A}_{t-1}, \bar{\boldsymbol{A}}_{t-2}, \bar{\boldsymbol{C}}_{t-2}) + \epsilon(\boldsymbol{U}_{t-2}) + \epsilon(\boldsymbol{U}_{t-1})},
\end{split}
\end{equation}
where $\epsilon(\boldsymbol{U}_{t-1})$ denotes the unmeasured common causes of treatment $\boldsymbol{A}_t$ and outcome $\boldsymbol{Y}_t$. Given $\boldsymbol{Z}_{t-1} \perp\!\!\!\perp \epsilon(\boldsymbol{U}_{t}) \mid \{\boldsymbol{A}_t, \bar{\boldsymbol{A}}_{t-1}, \bar{\boldsymbol{C}}_{t-1}\}$ and $\boldsymbol{Z}_{t-1} \not \! \perp \!\!\! \perp \epsilon(\boldsymbol{U}_{t-1}) \mid \{\boldsymbol{A}_t, \bar{\boldsymbol{A}}_{t-1}, \bar{\boldsymbol{C}}_{t-1}\}$, we can reformulate the decomposition condition in Proposition~\ref{pro:decompose} as follows:
\begin{equation}
\begin{split}
    {\boldsymbol{Z}}_{t-1} \perp\!\!\!\perp &\{h(\boldsymbol{A}_t, \bar{\boldsymbol{A}}_{t-1}, \bar{\boldsymbol{C}}_{t-1}) + \epsilon(\boldsymbol{U}_{t-1})\}\\ 
    \ \mid\  &\{\boldsymbol{A}_t, \bar{\boldsymbol{A}}_{t-1}, \bar{\boldsymbol{C}}_{t-1}, \epsilon(\boldsymbol{U}_{t-1})\}.
\end{split}
\end{equation}
Therefore, we can identify IV ${\boldsymbol{Z}}_{t-1}$ from:
\begin{equation}
\begin{split}
    {\boldsymbol{Z}}_{t-1} \perp\!\!\!\perp &\{h(\boldsymbol{A}_t, \bar{\boldsymbol{A}}_{t-1}, \bar{\boldsymbol{C}}_{t-1}) + \epsilon(\boldsymbol{U}_{t-1})\}  \\
    \ \mid\ &\{\boldsymbol{A}_t, \bar{\boldsymbol{A}}_{t-1}, \bar{\boldsymbol{C}}_{t-1}, \boldsymbol{Y}_{t-1}\}.
\end{split}
\end{equation}
Then, we can use the instrument ${\boldsymbol{Z}}_{t-1}$ to help us isolate the direct causal effect $h(\boldsymbol{A}_t, \bar{\boldsymbol{A}}_{t-1}, \bar{\boldsymbol{C}}_{t-1})$ from unmeasured confounding bias.
\end{proof}

By taking the expectation of both sides of Equation (\ref{eq:outcome}) conditioned on $\{\bar{\boldsymbol{C}}_{t-1},\boldsymbol{Z}_{t-1}\}$, given Assumption \ref{ass:additive} $\mathbb{E}[\boldsymbol{U}_{t}|\bar{\boldsymbol{X}}_{t-1}]=0$, we have:
\begin{equation}\label{eq:2sls}
\begin{split}
    &\mathbb{E}[\boldsymbol{Y}_t
    |\bar{\boldsymbol{C}}_{t-1},\boldsymbol{Z}_{t-1}] \\
    =&\Scale[0.9]{\mathbb{E}[h(\boldsymbol{A}_t,\bar{\boldsymbol{A}}_{t-1},\bar{\boldsymbol{C}}_{t-1})|\bar{\boldsymbol{C}}_{t-1},\boldsymbol{Z}_{t-1}] + \mathbb{E}[\epsilon(\boldsymbol{U})_{t-1}|\bar{\boldsymbol{C}}_{t-1}]} \\
    =&\Scale[0.9]{\int h(\boldsymbol{A}_{t},\bar{\boldsymbol{A}}_{t-1},\bar{\boldsymbol{C}}_{t-1})\ dF(\boldsymbol{A}_{t}|\bar{\boldsymbol{A}}_{t-1},\bar{\boldsymbol{C}}_{t-1},\boldsymbol{Z}_{t-1})},
\end{split}
\end{equation}
where $F(\boldsymbol{A}_{t}|\bar{\boldsymbol{A}}_{t-1},\bar{\boldsymbol{C}}_{t-1},\boldsymbol{Z}_{t-1})$ is the conditional treatment distribution. Equation (\ref{eq:2sls}) expresses an inverse problem for the function $h$ in terms of $\mathbb{E}[\boldsymbol{Y}_t|\bar{\boldsymbol{C}}_{t-1},\boldsymbol{Z}_{t-1}]$ and $F(\boldsymbol{A}_{t}|\bar{\boldsymbol{A}}_{t-1},\bar{\boldsymbol{C}}_{t-1},\boldsymbol{Z}_{t-1})$~\cite{newey2003instrumental}. A standard approach is two-stage regression: using observed confounders and instrument to predict conditional treatment distribution $\hat{F}(\boldsymbol{A}_{t}|\bar{\boldsymbol{A}}_{t-1},\bar{\boldsymbol{C}}_{t-1},\boldsymbol{Z}_{t-1})$, and then regressing outcome with estimated treatment $\hat{\boldsymbol{A}}_t$ sampled from $\hat{F}$ and observed confounders, i.e $\hat{h}=(\hat{\boldsymbol{A}}_t,\bar{\boldsymbol{A}}_{t-1}, \bar{\boldsymbol{C}}_{t-1})$. It can be regarded as a special form of the generalized method of moments~\cite{GMM}, abbreviated as GMM. In the context of negative controls~\cite{miao2024bridge}, we would like to extend this consistent estimation by applying GMM.

\section{Methodology}\label{sec:methodology}

Our proposed method, the Decomposing Sequential Instrumental Variable framework for Counterfactual Regression (DSIV-CFR), is illustrated in Figure~\ref{fig:model}. It consists of two key modules designed to address the challenges of sequential treatment effect estimation with unmeasured confounders. The first module focuses on leveraging transformers to model long sequential dependencies and learning the representations of instrumental variables (IVs) and confounders. The second module employs the generalized method of moments (GMM) to estimate counterfactual outcomes while effectively utilizing the learned representations.

\subsection{Long Sequential Modeling for Learning IV and Confounder Representations}\label{sec:IV}

In our setting for estimating the effect of sequential treatment, the relationships between variables are influenced not only by immediate past values but also by long-term dependencies. To capture these complex dependencies, we employ transformer~\cite{transformer,CT} as our backbone, which has shown state-of-the-art performance in modeling long-range interactions in sequential data. Specifically, we build a $6$-layer transformer with $8$ heads. The input to this module consists of the observed covariates, treatment variables, and outcomes at each time step, i.e., $\bar{\boldsymbol{H}}_{t-1}=\{\bar{\boldsymbol{X}}_{t-1},\bar{\boldsymbol{A}}_{t-1},\bar{\boldsymbol{Y}}_{t-1}\}$. In each head $j$, we calculate the keys $K^{(j)}$, queries $Q^{(j)}$, and values $V^{(j)}$ from a linear transformation of $\bar{\boldsymbol{H}}_{t-1}$ to obtain:
\vspace{-2pt}
\begin{equation}
\text{Atten}^{(j)}=\text{softmax}\left(\frac{Q^{(j)}K^{(j)\top}}{\sqrt{d_{qkv}}}\right)V^{(j)},
\end{equation}
where $d_{qkv}$ is the dimensionality. We use the concatenation as the output of this sub-module with $J$ heads:
\begin{equation}
\text{MHA}=\text{concat}\left(\text{Atten}^{(1)},\dots,\text{Atten}^{(J)}\right).
\end{equation}
The feed-forward layer with ReLU activation is then used for the non-linear transformations of MHA. Techniques such as layer normalization~\cite{layer-norm} and dropout~\cite{dropout} are also applied to enhance the stability and robustness of model training. The output of this module is a basic embedding $\psi(\bar{\boldsymbol{H}}_{t-1})$, which is then followed by $\phi_Z(\cdot)$ and $\phi_C(\cdot)$ to further disentangle IVs and confounders.

\textbf{Learning IV Representation.} Two conditions the IV should be satisfied: (1) Relevance. Instruments $\boldsymbol{Z}_{t-1}$ are required to be correlated with treatment $\boldsymbol{A}_{t}$. Inspired by AutoIV~\cite{autoIV}, we encourage the information of $\boldsymbol{Z}_{t-1}$ related to $\boldsymbol{A}_{t}$ to enter the IV representations $\phi_Z(\psi(\bar{\boldsymbol{H}}_{t-1}))$ by minimizing the additive inverse of contrastive log-ratio upper bound~\cite{CLUB}:
\begin{equation}\label{eq:mi-za}
\begin{split}
    \mathcal{L}_{ZA,t}=&-\frac{1}{n^2}\sum_{i=1}^n\sum_{j=1}^n[\ \log q_{\theta_{ZA}}(\boldsymbol{A}^i_{t}|\phi_Z(\psi(\bar{\boldsymbol{H}}^i_{t-1}))) \\
    &-\log q_{\theta_{ZA}}(\boldsymbol{A}^j_{t}|\phi_Z(\psi(\bar{\boldsymbol{H}}^i_{t-1})))\ ].
\end{split}
\end{equation}
The variational distribution $q_{\theta_{ZA}}(\boldsymbol{A}_{t}|\phi_Z(\psi(\bar{\boldsymbol{H}}_{t-1})))$ is determined by parameters $\theta_{ZA}$ to approximate the true conditional distribution $\mathbb{P}(\boldsymbol{A}_{t}|\phi_Z(\psi(\bar{\boldsymbol{H}}_{t-1})))$. Log-likelihood loss function of variational approximation is defined as:
\begin{equation}\label{eq:lld-za}
\mathcal{LLD}_{ZA,t}=-\frac{1}{n}\sum_{i=1}^n\log q_{\theta_{ZA}}(\boldsymbol{A}^i_{t}|\phi_Z(\psi(\bar{\boldsymbol{H}}^i_{t-1}))).
\end{equation}
(2) Exclusion. As mentioned in Theorem~\ref{theorem:IV}, we require ${\boldsymbol{Z}}_{t-1} \perp\!\!\!\perp \boldsymbol{Y}_t \mid \{\boldsymbol{A}_t, \bar{\boldsymbol{A}}_{t-1}, \bar{\boldsymbol{C}}_{t-1}, {\boldsymbol{Y}}_{t-1}\}$. Similarly,
\begin{equation}\label{eq:mi-zy}
\begin{split}
    \mathcal{L}_{ZY,t}=\frac{1}{n^2}\sum_{i=1}^n\sum_{j=1}^n\{&w^{ij}_{t-1}[\ \log q_{\theta_{ZY}}(\boldsymbol{Y}^i_{t}|\phi_Z(\psi(\bar{\boldsymbol{H}}^i_{t-1}))) \\
    &-\log q_{\theta_{ZY}}(\boldsymbol{Y}^j_{t}|\phi_Z(\psi(\bar{\boldsymbol{H}}^i_{t-1})))\ ]\ \},
\end{split}
\end{equation}
where $w_{t-1}^{ij}$ is the weight of sample pair $(i,j)$ to achieve the conditional independence mentioned above. It is determined by the discrepancy between $v_t^i=[\boldsymbol{A}_t^i, \bar{\boldsymbol{A}}_{t-1}^i, \bar{\boldsymbol{C}}_{t-1}^i, {\boldsymbol{Y}}_{t-1}^i]$ and $v_t^j=[\boldsymbol{A}_t^j, \bar{\boldsymbol{A}}_{t-1}^j, \bar{\boldsymbol{C}}_{t-1}^j, {\boldsymbol{Y}}_{t-1}^j]$ in the RBF kernel:
\begin{equation}
    w_{t-1}^{ij}=\text{softmax}\left(\exp\left(-\frac{\lVert v_i-v_j\lVert^2}{2\sigma^2}\right)\right).
\end{equation}
The hyper-parameter $\sigma$ is used to control the width of the Gaussian function, which we set to $1$ in the experiments.

\textbf{Learning Confounder Representation.} There are also two restrictions that should be taken into account: (1) $\boldsymbol{A}_{t}\leftarrow\boldsymbol{C}_{t-1}\rightarrow\boldsymbol{Y}_{t}$. Therefore, we minimize:
\begin{equation}\label{eq:mi-ca}
\begin{split}
    \mathcal{L}_{CA,t}=&-\frac{1}{n^2}\sum_{i=1}^n\sum_{j=1}^n[\ \log q_{\theta_{CA}}(\boldsymbol{A}^i_{t}|\phi_C(\psi(\bar{\boldsymbol{H}}^i_{t-1}))) \\
    &-\log q_{\theta_{CA}}(\boldsymbol{A}^j_{t}|\phi_C(\psi(\bar{\boldsymbol{H}}^i_{t-1})))\ ],
\end{split}
\end{equation}
\begin{equation}\label{eq:mi-cy}
\begin{split}
    \mathcal{L}_{CY,t}=&-\frac{1}{n^2}\sum_{i=1}^n\sum_{j=1}^n[\ \log q_{\theta_{CY}}(\boldsymbol{Y}^i_{t}|\phi_C(\psi(\bar{\boldsymbol{H}}^i_{t-1}))) \\
    &-\log q_{\theta_{CY}}(\boldsymbol{Y}^j_{t}|\phi_C(\psi(\bar{\boldsymbol{H}}^i_{t-1})))\ ],
\end{split}
\end{equation}
(2) We require ${\boldsymbol{Z}}_{t-1} \perp\!\!\!\perp \bar{\boldsymbol{C}}_{t-1}$ in Theorem~\ref{theorem:IV}.
\begin{equation}\label{eq:mi-zc}
\begin{split}
    \mathcal{L}_{ZC,t}=\frac{1}{n^2}\sum_{i=1}^n\sum_{j=1}^n[\ &\log q_{\theta_{ZC}}(\boldsymbol{Z}_{t-1}^i|\bar{\boldsymbol{C}}_{t-1}^i) \\
    -&\log q_{\theta_{ZC}}(\boldsymbol{Z}_{t-1}^j|\bar{\boldsymbol{C}}_{t-1}^i)\ ],
\end{split}
\end{equation}
where $\boldsymbol{Z}_{t-1}$ and $\bar{\boldsymbol{C}}_{t-1}$ are approximated by $\phi_Z(\psi(\bar{\boldsymbol{H}}_{t-1}))$ and $\{\phi_C(\psi(\bar{\boldsymbol{H}}_{1})),\dots,\phi_C(\psi(\bar{\boldsymbol{H}}_{t-1}))\}$, respectively.

The overall loss function of these mutual information restrictions could be concluded as:
\begin{equation}\label{eq:mi-loss}
    \mathcal{L}_{MI,t}=\mathcal{L}_{ZA,t}+\mathcal{L}_{ZY,t}+\mathcal{L}_{CA,t}+\mathcal{L}_{CY,t}+\mathcal{L}_{ZC,t},
\end{equation}
where $\mathcal{L}_{MI}=\frac{1}{T}\sum_{t=1}^{T}\mathcal{L}_{MI}^t$ and $T$ is denoted as the length of historical time series. To optimize the parameters of variational distributions, i.e. $\{\theta_{ZA},\theta_{ZY},\theta_{CA},\theta_{CY},\theta_{ZC}\}$, we combine all the variational approximation loss as:
\begin{equation}\label{eq:lld}
\begin{split}
    \mathcal{L}_{LLD}=&\frac{1}{T}\sum_{t=1}^{T}(\mathcal{LLD}_{ZA,t}+\mathcal{LLD}_{ZY,t} \\ 
    + &\mathcal{LLD}_{CA,t}+\mathcal{LLD}_{CY,t}+\mathcal{LLD}_{ZC,t}),
\end{split}
\end{equation}
where the definitions of $\mathcal{LLD}_{ZY,t}$, $\mathcal{LLD}_{CA,t}$, $\mathcal{LLD}_{CY,t}$, and $\mathcal{LLD}_{ZC,t}$ are similar to that of $\mathcal{LLD}_{ZA,t}$ in Equation (\ref{eq:lld-za}), and we have omitted them 
due to space limitations. Using contrastive learning with upper bound for modeling representation, we automatically separate $\boldsymbol{Z}$ and $\boldsymbol{C}$ from $\boldsymbol{X}$ by Equation (\ref{eq:mi-zc}), while ensuring the validity of $\boldsymbol{Z}$ by Equation (\ref{eq:mi-za})-(\ref{eq:mi-zy}) and the confounding properties by Equation (\ref{eq:mi-ca})-(\ref{eq:mi-cy}). Although we have stated the static relationships between $\boldsymbol{Z}$, $\boldsymbol{C}$ and $\boldsymbol{X}$ in Section~\ref{sec:formulation}, our model is also applicable to the situation where the relationships are dynamic.

\begin{table*}[t]
\caption{Statistics of datasets.}
\label{tab:datasets}
\vskip 0.1in
\begin{center}\small
\begin{tabular}{ccccccccc}
\toprule
\textbf{Dataset} & \textbf{Type of} $\boldsymbol{A}$ & $\textbf{dim}_{\boldsymbol{A}}$ & $\textbf{dim}_{\boldsymbol{X}}$ &  $\textbf{dim}_{\boldsymbol{U}}$ & $\boldsymbol{T}$ & \textbf{\# Train} &  \textbf{\# Validation} &  \textbf{\# Test} \\
\midrule
Simulation & binary  & $1$ & $10$ & $3$ & $100$ & $10,000$ & $1,000$ & $1,000$ \\
Tumor growth & continuous & $2$ & $3$ & $3$ & $60$ & $10,000$ & $1,000$ & $21,741$ \\
Cryptocurrency & continuous & $1$ & $5$ & unknown & $5$ & $86$ & $29$ & $29$  \\
MIMIC-\uppercase\expandafter{\romannumeral 3} & binary & $2$ & $69$ & unknown & $60$ & $4,293$ & $920$ & $920$
\\
\bottomrule
\end{tabular}
\end{center}
\vskip -0.2in
\end{table*}

\begin{table*}[t]
\caption{Results of one-step-ahead outcome prediction on synthetic and real-world datasets.}
\label{tab:main-exp}
\vskip 0.1in
\begin{center}
\begin{small}
\begin{tabular}{ccccc}
\toprule
\textbf{Method} & \textbf{Simulation} & \textbf{Tumor} & \textbf{Cryptocurrency} & \textbf{MIMIC-\uppercase\expandafter{\romannumeral 3}} \\
\midrule
Time Series Deconfounder & $0.930\pm0.101$ & 
$0.753\pm0.042$ & $2.141\pm0.051$ & $1.071\pm0.039$ \\
Causal Transformer & $1.256\pm0.081$ & $0.716\pm0.003$ & $2.426\pm0.014$ & $1.229\pm0.026$ \\
ACTIN & $1.481\pm0.129$ & $1.041\pm0.002$ & $2.135\pm0.005$ & $1.162\pm0.043$ \\
ORL & $0.798\pm0.115$ & $0.347\pm0.010$ & $1.861\pm0.012$ & $1.253\pm0.024$ \\
Deep LTMLE & $0.531\pm0.077$ & $0.202\pm0.074$ & $1.553\pm0.061$ & $1.260\pm0.134$ \\ 
\midrule
\textbf{DSIV-CFR} & $\textbf{0.105}\boldsymbol{\pm}\textbf{0.017}$ & $\textbf{0.047}\boldsymbol{\pm}\textbf{0.003}$ & $\textbf{0.375}\boldsymbol{\pm}\textbf{0.039}$ & $\textbf{0.634}\boldsymbol{\pm}\textbf{0.019}$ \\
\bottomrule
\addlinespace[0.5em]
\multicolumn{3}{l}{\footnotesize * Lower = better (best in bold)} \\
\end{tabular}
\end{small}
\end{center}
\vskip -0.2in
\end{table*}

\subsection{GMM for Counterfactual Regression}\label{sec:agmm}

As discussed in Section~\ref{sec:formulation}, we establish a GMM framework to accurately estimate the potential outcome in the presence of unmeasured confounders. We also apply a transformer $h(\cdot)$ as the backbone of the estimator $\hat{\boldsymbol{Y}}_t=h(\boldsymbol{A}_t,\bar{\boldsymbol{A}}_{t-1},\bar{\boldsymbol{C}}_{t-1})$ mentioned in Equation (\ref{eq:2sls}). It is designed to predict the potential outcome $\boldsymbol{Y}_t$ from treatment $\boldsymbol{A}_t$ and a wealth of historical information $\{\bar{\boldsymbol{A}}_{t-1},\bar{\boldsymbol{C}}_{t-1}\}$, which can be seen as the confounders illustrated on the right of Figure~\ref{fig:causal-graph}. The basic prediction error is taken into account:
\begin{equation}\label{eq:mse-loss}
    \min_h \quad \mathcal{L}_{MSE}=\frac{1}{n\cdot T}\sum_{t=1}^{T}\sum_{i=1}^n\left(\hat{\boldsymbol{Y}}_t^i-\boldsymbol{Y}_t^i\right)^2,
\end{equation}
which could be viewed as a second-order moment constraint. To further implement the process described by Equation (\ref{eq:2sls}), we build a bridge function $f(\cdot)$ to obtain learnable weights $\boldsymbol{M}_{t-1}=f(\bar{\boldsymbol{A}}_{t-1},\bar{\boldsymbol{C}}_{t-1},\boldsymbol{Z}_{t-1})$. Objective is defined as:
\begin{equation}\label{eq:adv-loss}
    \min_h \max_f \quad \mathcal{L}_{adv}=\frac{1}{n\cdot T}\sum_{t=2}^{T+1}\sum_{i=1}^n\boldsymbol{M}_{t-1}^i\left(\hat{\boldsymbol{Y}}_t^i-\boldsymbol{Y}_t^i\right),
\end{equation}
where $\boldsymbol{M}_{t-1}^i$ is the weight of the $i$-th sample in $\boldsymbol{M}_{t-1}$.

The overall objective of our DSIV-CFR can be concluded as minimizing the following loss function:
\begin{equation}\label{eq:overall-loss}
    \min_h \max_f \mathcal{L}=\mathcal{L}_{MSE}+\alpha\cdot\mathcal{L}_{MI}+\beta\cdot\mathcal{L}_{adv},
\end{equation}
where $\{\alpha,\beta\}$ are two hyper-parameters to control the significance of different modules. We summarize the model optimization process as Algorithm~\ref{alg:workflow} in Appendix~\ref{app:codes}.

\section{Experiments}

\subsection{Baselines}

In Section~\ref{sec:related}, we discussed several works of causal inference on time series data. We applied these methods as baselines for the comparison of our DSIV-CFR, including Time Series Deconfounder~\cite{TSD}, surrogate-based approach~\cite{keith2021estimating}, Causal Transformer~\cite{CT}, ACTIN~\cite{ACTIN}, ORL~\cite{ORL}, and Deep LTMLE~\cite{DLTMLE}. More details are concluded in Appendix~\ref{app:exp-detail}.

\subsection{Datasets}

To comprehensively evaluate the performance of all models under various data conditions, we applied both synthetic data and real-world datasets. Statistics of these datasets are summarized in Table~\ref{tab:datasets}. The process of data generation for the fully-synthetic dataset, which we call \textbf{Simulation} dataset, is described in Appendix~\ref{app:exp-detail}.

\textbf{Tumor growth}~\cite{dataset-tumor}. Following previous works~\cite{CT,ACTIN}, we also applied a tumor growth simulator for data generation. We select the patient type and two static features as covariates. Treatments include chemotherapy and radiation therapy. Outcome of interest is the tumor volume. Considering the range of data, we normalized $\boldsymbol{X}$ and $\boldsymbol{Y}$.

\textbf{Cryptocurrency}\footnote{Cryptocurrency is available at \url{https://github.com/binance/binance-public-data}}. We also collected real-life time series data of Bitcoin and Ethereum from July $1$, $2023$ to October $29$, $2024$. Covariates include the opening price, lowest price, highest price, closing price, and trading volume for the day. Treatment refers to the positions held by individuals, while the outcome of interest is the return rate (\%). We manually divided the complete sequence into time segments of length $5$, setting the time gap to $5$ as well to prevent data leakage.

\textbf{MIMIC-\uppercase\expandafter{\romannumeral 3}}~\cite{dataset-mimic}. It is a comprehensive, publicly available database containing de-identified health data from patients admitted to critical care units at a large tertiary care hospital. Referring to data processing in the Causal Transformer~\cite{CT}, we obtain $69$ covariates on the vital signs of patients. Vasopressor and mechanical ventilation are two treatments taken into account. We also choose blood pressure as the outcome.

\subsection{Experimental Results}

\subsubsection{One-step-ahead prediction}

Our targeted estimand is the counterfactual outcome in the future time step, denoted as $\hat{\boldsymbol{Y}}_{t}(\boldsymbol{a}_t|\bar{\boldsymbol{H}}_{t-1})$. To evaluate the performance of these estimators, we use the MSE metric defined in Equation (\ref{eq:mse-loss}). To ensure the robustness and reliability of our experimental results, we independently repeated each experiment $5$ times, each with a different random seed. Finally, we report the mean and standard deviation (std) of MSE in the format of mean ± std. The experimental results of the four datasets are summarized in Table~\ref{tab:main-exp}, which comprehensively indicates the performance of the model in the diverse settings. It can be seen that our DSIV-CFR outperforms the state-of-the-art baselines on all datasets by a large margin.

The average running time of this experiment is detailed in Table~\ref{tab:time} (Appendix~\ref{app:time}). Although the introduction of mutual information loss $\mathcal{L}_{MI}$ and adversarial loss $\mathcal{L}_{adv}$ has led to a time increase, it is justified by the significant improvement in model performance. Generally, the overall training time remains within an acceptable range.

Since many baselines rely on the unconfoundedness assumption, i.e. do not consider the unmeasured confounders, we conducted an additional trial for fairness. We set all the confounding variables to be observable and retested these methods. Even in this case, where the baselines have access to more information than our approach, DSIV-CFR still achieves better performance.

\subsubsection{Hyper-parameter analysis}

\begin{figure}[t]
\begin{center}
\centerline{\includegraphics[width=0.5\textwidth]{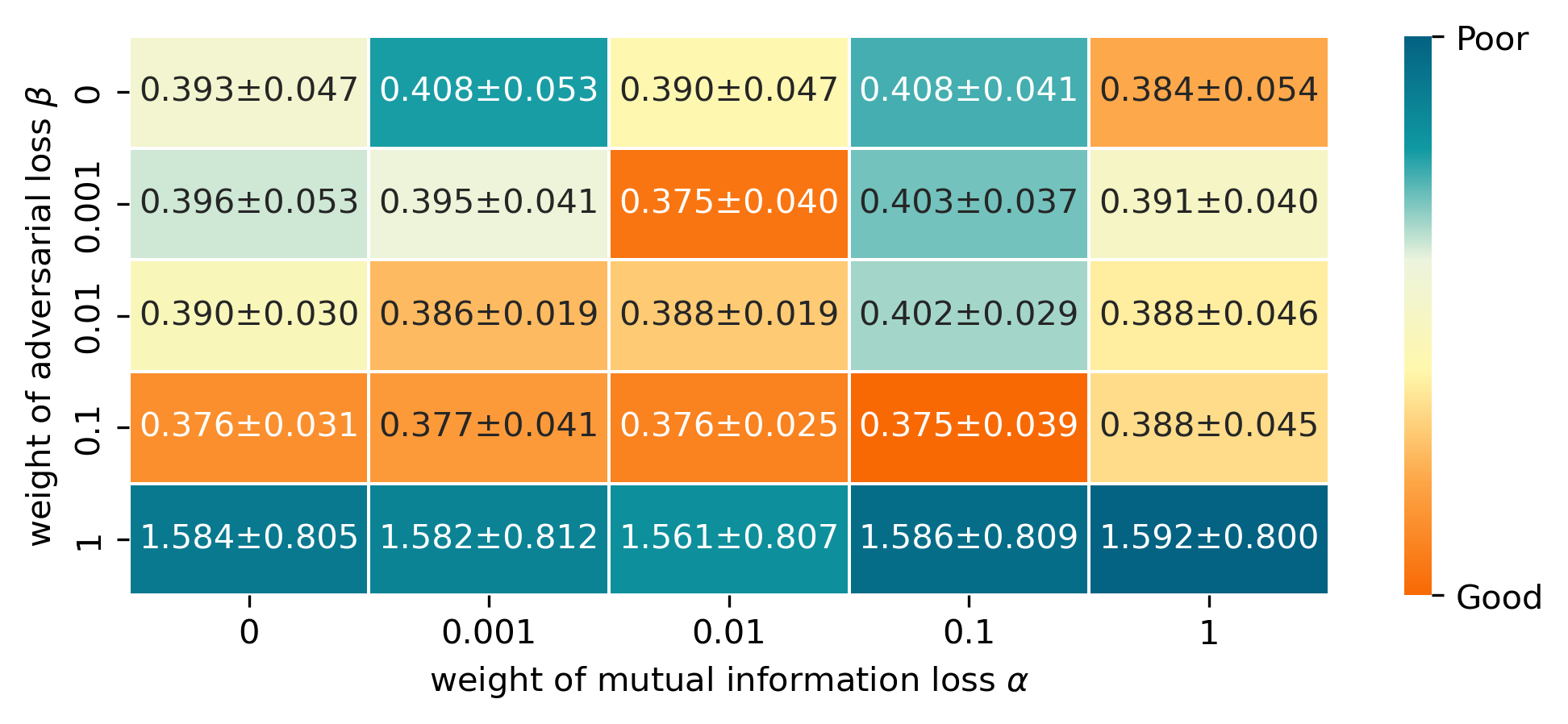}}
\vskip -0.1in
\caption{Results of hyper-parameter analysis on the Cryptocurrency dataset. Setting the $\alpha$ to $0$ is equivalent to removing $\mathcal{L}_{MI}$, and $\beta=0$ means $\mathcal{L}_{adv}$ is deleted. The heatmap on the right represents the changes in performance. If a cell's color is closer to a warm color (orange), it means that the model trained with the corresponding $\{\alpha,\beta\}$ combination has better performance. Conversely, if the cell's color is closer to a cool color (green), it indicates poorer performance.}
\label{fig:hyper}
\end{center}
\vskip -0.35in
\end{figure}

We explore the combination of $\alpha$ and $\beta$, which control the significance of mutual information constraints ($\mathcal{L}_{MI}$) for IV decomposition and adversarial function learning ($\mathcal{L}_{adv}$) in the GMM framework, respectively. When $\alpha$ or $\beta$ is set to $0$, it indicates that the corresponding module is ablated. Such evaluation is conducted with the Cryptocurrency dataset. According to Figure~\ref{fig:hyper}, the effectiveness of each module could be validated. If the two losses are moderately incorporated, model performance will be better than focusing solely on MSE. The best performance is achieved by setting $\alpha=0.1$ and $\beta=0.1$. However, if their weights are blindly increased, it may create a conflict with the primary objective, which is to minimize the prediction error.

\subsubsection{Sequential treatment decision making}

CT~\cite{CT} and ACTIN~\cite{ACTIN} can be extended to predict the future outcomes $\tau$ steps ahead, i.e. $\boldsymbol{Y}_{t+\tau}(\boldsymbol{a}_{t:t+\tau}|\bar{\boldsymbol{H}}_{t-1})$. The main idea is to additionally predict the next values of $\boldsymbol{X}$. In this way, in subsequent steps, the estimated values $\hat{\boldsymbol{X}}_{t:t+\tau-1}$ and $\hat{\boldsymbol{Y}}_{t:t+\tau-1}$ are used in place of the required observations to continue the cycle of prediction. Although our method could also be easily extended in the aforementioned manner, i.e. iteratively repeating the estimation of single time step to obtain the result of the ultimate moment, we attempted to directly predict the future outcomes of sequential treatments. We naturally consider applying this model to a downstream task, i.e. decision-making for the optimal sequential treatments in the next time period. Implementations of such methodology extension is clarified in Appendix~\ref{app:decision-making}. We conducted an experiment on multi-step-ahead decision-making and set $\tau$ to $5$. To obtain the oracle outcomes of all possible treatment sequences, we use the simulated dataset and selected the two baselines (CT and ACTIN) tailored for the similar task. $\boldsymbol{Y}_{t+\tau}(\boldsymbol{a}^*_{t:t+\tau})$ of decision making $\boldsymbol{a}^*_{t:t+\tau}$ compared to the oracle are illustrated in Figure~\ref{fig:decision}. Our method also performs well and achieves results that are close to the oracle.

\begin{figure}[t]
\begin{center}
\centerline{\includegraphics[width=0.5\textwidth]{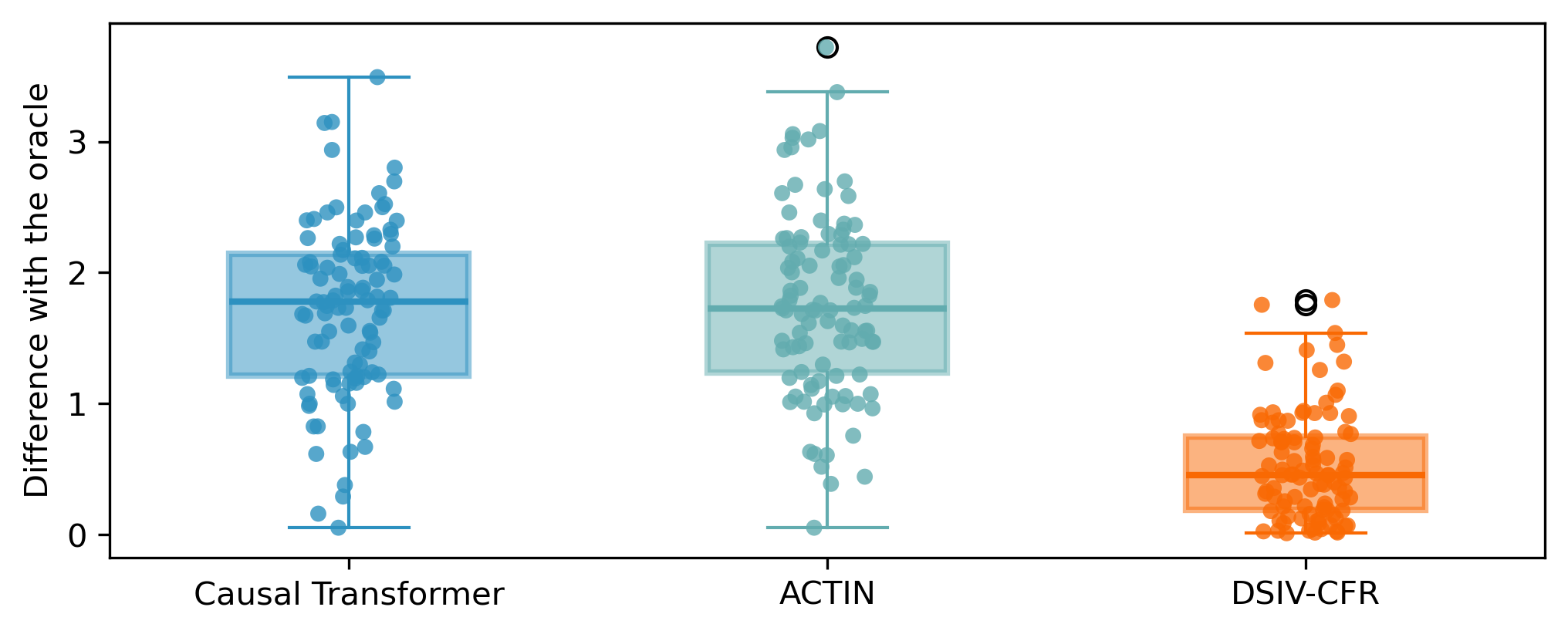}}
\vskip -0.1in
\caption{Results of decision making $5$ steps ahead. Values on the vertical axis represent the differences between oracle and the corresponding results of predicted optimal treatments (lower=better).}
\label{fig:decision}
\end{center}
\vskip -0.35in
\end{figure}

\section{Conclusion}

The presence of unmeasured confounders is a critical challenge of estimating the cumulative effects of time-varying treatments with time series data. We propose DSIV-CFR, a novel method that leverages negative controls and mutual information constraints to decompose IVs and confounders from the observations. DSIV-CFR accurately estimates the future potential outcome with unmeasured confounders through a modular design that combines transformers for sequential dependency modeling and GMM for counterfactual estimation with the learned IVs. This model can also be naturally extended for sequential treatment decision making, which holds significant potential for real-world scenarios.

\section*{Impact Statement}

This paper introduces a novel DSIV-CFR, which estimates the future potential outcome on time series data with unobserved confounders. This advancement in machine learning could improve decision making in various fields such as medicine, finance, and meteorology. For example, in personalized medicine, DSIV-CFR is able to predict the counterfactual outcome in the future of the treatment that may be taken for the next period, with the help of historical records. In this way, it allows doctors to provide better treatment strategies. The limitation of our method lies in the constraints mentioned in Assumption~\ref{ass:additive}. If $\mathbb{E}[\boldsymbol{U}|\boldsymbol{X}]\neq 0$, the estimate of potential outcome will be biased, but it still works to provide a consistent estimate of treatment effect.


\bibliography{example_paper}
\bibliographystyle{icml2025}

\newpage
\appendix
\onecolumn

\section{Pseudo-Code}\label{app:codes}

As stated in Section~\ref{sec:methodology}, we propose a novel DSIV-CFR method to accurately estimate the effect of sequential treatment in the presence of unmeasured confounders. It comprises two key modules. The first module utilizes transformers to model long sequential dependencies and captures the representations of instrumental variables (IVs) and confounders. The second module then employs the Generalized Method of Moments (GMM) to estimate counterfactual outcomes by effectively leveraging the learned representations. The detailed pseudo-code of DSIV-CFR is provided in Algorithm~\ref{alg:workflow}.

\begin{algorithm}[h]
   \caption{Decomposing Sequential Instrumental Variable framework for CounterFactual Regression}
   \label{alg:workflow}
\begin{algorithmic}
   \STATE {\bfseries Input:} dataset $\mathcal{D}=\{\bar{\boldsymbol{H}}_{t-1},\boldsymbol{A}_t\}_{t=1}^{T+1}$, transformers $\psi$ and $h$, representation heads $\phi_Z$ and $\phi_C$, bridge function $f$, hyper-parameter $\alpha$ and $\beta$, maximum iteration $K$, alternating training rounds $R$
   \STATE {\bfseries Output:} future outcomes $\hat{\boldsymbol{Y}}_{t=T+1}$
   \STATE Initialize parameters $\theta$ of $\{\psi,\phi_Z,\phi_C\}$, $\xi$ of $h$, $\zeta$ of $f$
   \FOR{$i\gets1$ \textbf{to} $K$}
        \FOR{$t\gets1$ \textbf{to} $T$}
            \STATE \textcolor{gray}{// Module 1: Long Sequential Modeling for Learning IV and Confounder Representations}
            \STATE Obtain basic embedding $\psi(\bar{\boldsymbol{H}}_{t-1})$
            \STATE Obtain instrumental variable representation $\boldsymbol{Z}_{t-1}\gets\phi_Z(\psi(\bar{\boldsymbol{H}}_{t-1}))$
            \STATE Obtain confounder representation $\boldsymbol{C}_{t-1}\gets\phi_C(\psi(\bar{\boldsymbol{H}}_{t-1}))$
            \STATE Calculate mutual information loss $\mathcal{L}_{MI}$ by Equation (\ref{eq:mi-loss})
            \STATE \textcolor{gray}{// Module 2: GMM for Counterfactual Regression}
            \STATE Predict potential outcome in the next time step $\hat{\boldsymbol{Y}}_t\gets h(\boldsymbol{A}_t,\bar{\boldsymbol{A}}_{t-1},\bar{\boldsymbol{C}}_{t-1})$
            \STATE Calculate MSE loss $\mathcal{L}_{MSE}$ by Equation (\ref{eq:mse-loss})
            \STATE Obtain learnable weights $\boldsymbol{M}_{t-1}\gets f(\bar{\boldsymbol{C}}_{t-1},\boldsymbol{Z}_{t-1})$
            \STATE Calculate adversarial loss $\mathcal{L}_{adv}$ by Equation (\ref{eq:adv-loss})
        \ENDFOR
        \STATE Calculate overall loss $\mathcal{L}\gets\mathcal{L}_{MSE}$ by Equation (\ref{eq:overall-loss})
        \STATE Update $\xi\gets\xi'$ after $\mathcal{L}$.backward( )
        \STATE \textcolor{gray}{// Alternating training for variational distribution approximation}
        \FOR{$j\gets1$ \textbf{to} $R$}
            \STATE Obtain $\bar{\boldsymbol{Z}}_{t-1}$ and $\bar{\boldsymbol{C}}_{t-1}$ by repeating Module 1
            \STATE Calculate log-likelihood loss $\mathcal{L}_{LLD}$ by Equation (\ref{eq:lld})
            \STATE Update $\theta\gets\theta'$ after $\mathcal{L}_{LLD}$.backward( )
        \ENDFOR
        \STATE \textcolor{gray}{// Alternating training for bridge function}
        \FOR{$j\gets1$ \textbf{to} $R$}
            \STATE Obtain $[\hat{Y}_t, \dots]$ and $[\boldsymbol{M}_{t-1},\dots]$ by repeating Module 2
            \STATE Calculate log-likelihood loss $\mathcal{L}_{adv}$ by Equation (\ref{eq:adv-loss})
            \STATE Update $\zeta\gets\zeta'$ after $\mathcal{L}_{adv}$.backward( )
        \ENDFOR
        \STATE Decompose historical observations $\bar{\boldsymbol{H}}_{T}$ into confounders $\bar{\boldsymbol{C}}_T$ and IV $\boldsymbol{Z}_T$ by networks $\{\psi,\phi_Z,\phi_C\}$
        \STATE Predict future outcomes $\hat{\boldsymbol{Y}}_{t=T+1}\gets h(\boldsymbol{A}_{T+1},\bar{\boldsymbol{A}}_T,\bar{\boldsymbol{C}}_T)$
   \ENDFOR
   \STATE \textbf{return} $\hat{\boldsymbol{Y}}_{t=T+1}$
\end{algorithmic}
\end{algorithm}

\newpage
\section{Implementation details of experiments}\label{app:exp-detail}

\textbf{Baselines.} We have concluded some details of the baselines applied in this paper, which are shown in Table~\ref{tab:baseline}. Among them, Time Series Deconfounder~\cite{TSD}, abbreviated as TSD, considers the scenario with unobserved confounders, while other methods all rely on the unconfoundedness assumption. We also clarify their backbones for modeling the time series data. Implementations of them are all available at the given links. Although there is also a surrogate-based method~\cite{keith2021estimating}, we have not included it in the baselines for comparison temporarily, as we have not found its open-source code.

\begin{table}[h]
\caption{Details of baselines.}
\label{tab:baseline}
\vskip 0.1in
\begin{center}
\begin{small}
\begin{tabularx}{\textwidth}{c|cc|>{\centering\arraybackslash}m{11cm}} 
\toprule
\textbf{Method} & \textbf{Consider} $\boldsymbol{U}$ & \textbf{Backbone} & \textbf{Open-source} \\
\midrule
TSD & $\checkmark$ & RNN & \url{https://github.com/vanderschaarlab/mlforhealthlabpub/tree/main/alg/time_series_deconfounder} \\
\midrule
CT & $\times$ & Transformer & \url{https://github.com/Valentyn1997/CausalTransformer} \\
ACTIN & $\times$ & LSTM & \url{https://github.com/wangxin/ACTIN} \\
ORL & $\times$ & RL & \url{https://github.com/allentran/long-term-ate-orl} \\
DLTMLE & $\times$ & Transformer & \url{https://github.com/shirakawatoru/dltmle-icml-2024} \\ 
\bottomrule
\end{tabularx}
\end{small}
\end{center}
\end{table}

\textbf{Data generation for one-step ahead prediction.} For each moment, we generate $3$-dimensional IVs $\boldsymbol{Z}_t$ and $7$-dimensional confounders $\boldsymbol{C}_t$, each dimension of which follows a uniform distribution $\mathcal{U}(0,1)$. There are also unobserved confounders $\boldsymbol{U}_t$ of $3$ dimensions, and each dimension is randomly sampled from $\mathcal{N}(0,1)$. Specifically,
\begin{equation}
    \left\{
\begin{aligned}
  &\ \boldsymbol{Z}_{t+1}\gets 0.4\cdot\boldsymbol{Z}_t+0.6\cdot\boldsymbol{Z}_{t+1}+0.3\cdot\sin t \\
  &\ \boldsymbol{C}_{t+1}\gets 0.3\cdot\boldsymbol{C}_t+0.7\cdot\boldsymbol{C}_{t+1}+0.2\cdot\sin t \\
  &\ \boldsymbol{U}_{t+1}\gets\boldsymbol{U}_{t+1}-0.1\cdot\cos t
\end{aligned}
\right.
\end{equation}
As for the training and validation data, their treatments $\boldsymbol{A}_{t+1}$ of each time step is simulated by the following process.
\begin{equation}\label{eq:simulation-a}
\begin{split}
    &\text{logit}=\sum_{i=1}^d\left(coef_{a,i}\cdot\boldsymbol{V}_{t,i}-\cos{\boldsymbol{V}_{t,i}^2}\right)-0.5\cdot\boldsymbol{A}_t+0.2\cdot\boldsymbol{Y}_t -0.1\cdot\sin t, \quad \boldsymbol{V}_t=\{\boldsymbol{Z}_t, \boldsymbol{C}_t, \boldsymbol{U}_t\} \\
    &\mathbb{P}(\boldsymbol{A}_{t+1})=\frac{1}{1+\exp(-\text{logit})}, \quad \boldsymbol{A}_{t+1}=\left\{
\begin{aligned}
  &\ 0,\ \text{if}\ \ \mathbb{P}(\boldsymbol{A}_{t+1})< 0.5 \\
  &\ 1,\ \text{if}\ \ \mathbb{P}(\boldsymbol{A}_{t+1})\geq 0.5
\end{aligned}
\right.
\end{split}
\end{equation}
The coefficients $coef_a$ are randomly generated. We initialized that $\boldsymbol{A}_0=[0,\dots,0]$ and $\boldsymbol{Y}_0=[0,\dots,0]$. To evaluate the counterfactual prediction capabilities of each model, sequences $\boldsymbol{A}$ are randomly generated in the test set. In all sets, the generation of $\boldsymbol{Y}_{t+1}$ can be described as :
\begin{equation}\label{eq:simulation-y}
    \boldsymbol{Y}_{t+1}=\sum_{i=1}^d\left(coef_{y,i}\cdot\boldsymbol{V}_{t,i}'\right)-0.2\cdot\sin\boldsymbol{A}_{t+1}+0.5\cdot\sin\frac{t}{5},\quad \boldsymbol{V}_t'=\{\bar{\boldsymbol{C}}_t,\boldsymbol{U}_t,\boldsymbol{U}_{t-1}\}
\end{equation}

\newpage
\section{Sequential treatment decision making}\label{app:decision-making}

\textbf{Methodology extension.} Our method can be naturally extended to make decisions about determining the optimal treatment plan $\tau$ steps ahead. Specifically, the treatment $\boldsymbol{a}_{t:t+\tau-1}$ taken into consideration is a sequence of length $\tau$, where each dimension can independently take one of $d$ possible treatments. For example, in our setting, we set the time window $\tau$ to $5$ and the treatment to be binary ($d=2$). We first traverse all of the $2^5=32$ possible alternatives, infer the corresponding outcomes, and select the optimal one. For each subsequent moment, we still use the transformer $\psi(\cdot)$ in Section~\ref{sec:IV} to learn a basic representation of the observations $\{\bar{\boldsymbol{X}}_{t+3},\bar{\boldsymbol{A}}_{t+3},\bar{\boldsymbol{Y}}_{t-1}\}$, followed by $\phi_Z$ and $\phi_C$ to obtain the IVs $\{\boldsymbol{Z}_{t-1},\boldsymbol{Z}_{t},\boldsymbol{Z}_{t+1},\boldsymbol{Z}_{t+2},\boldsymbol{Z}_{t+3}\}$ and confounders $\{\boldsymbol{C}_{t-1},\boldsymbol{C}_{t},\boldsymbol{C}_{t+1},\boldsymbol{C}_{t+2},\boldsymbol{C}_{t+3}\}$. It is important to note that, since IVs are exogenous variables, the covariates for decomposition, i.e. $\{\boldsymbol{X}_{t-1}, \boldsymbol{X}_{t},\boldsymbol{X}_{t+1},\boldsymbol{X}_{t+2},\boldsymbol{X}_{t+3}\}$, must be observable in our setting. If we were to predict $\boldsymbol{X}_{t-1:t+3}$ on our own, as in the case of CT~\cite{CT}, it would not include the IVs. We then modify the counterfactual regression module mentioned in Section~\ref{sec:agmm}, where the second transformer $h(\cdot)$ to directly predict the $5$-step-ahead outcome by $\boldsymbol{Y}_{t+4}=h(\boldsymbol{A}_{t:t+4},\bar{\boldsymbol{A}}_{t-1},\bar{\boldsymbol{C}}_{t+3})$. As for the adversarial loss, we rebuild the bridge function $f(\cdot)$ to learn the weights $\boldsymbol{M}_{t+3}=f(\bar{\boldsymbol{A}}_{t+3}, \bar{\boldsymbol{C}}_{t+3}, \boldsymbol{Z}_{t-1:t+3})$. The other parts of our DSIV-CFR remain unchanged, as described in the previous sections.

\textbf{Data generation for five-step ahead decision making.} The dimension of $\boldsymbol{Z}$, $\boldsymbol{C}$, and $\boldsymbol{U}$ are $3$, $12$, and $5$, respectively. Each dimension of $\boldsymbol{Z}$ and $\boldsymbol{C}$ follows a uniform distribution $\mathcal{U}(0,3)$. As for the training and validation sets, we respectively generate $2,000$ and $200$ samples roughly following the data generation process described in Appendix~\ref{app:exp-detail}, except that $\boldsymbol{Y}$ will be affected by the interventions in the prior $5$ time steps, which can be expressed by the following formulation:
\begin{equation}
\boldsymbol{Y}_{t+1}=0.2\sum_{i=1}^d\left(coef_{y,i}\cdot\boldsymbol{V}_{t,i}'\right)-0.5\sum_{j=0}^4coef_{seq,j}\boldsymbol{A}_{t+1-j}+\sin t,\quad \boldsymbol{V}_t'=\{\bar{\boldsymbol{C}}_t,\boldsymbol{U}_t,\boldsymbol{U}_{t-1}\},
\end{equation}
where coefficients $coef_{seq}$ are randomly generated. These two sets include the information of $\{\bar{\boldsymbol{H}}_{29},\boldsymbol{Y}_{30}\}$. To generate the test set for evaluation of decision making, we first generate the historical information of $25$ time steps following the aforementioned steps. Afterwards, for each treatment plan $b$ in the $32$ alternatives $\mathcal{B}$, we directly apply the assigned interventions $\{\boldsymbol{A}_{26}^b,\dots,\boldsymbol{A}_{30}^b\}$ instead of inferring them with Equation (\ref{eq:simulation-a}), to iteratively calculate $\{\boldsymbol{Y}_{26}^b,\dots,\boldsymbol{Y}_{30}^b\}$ by Equation (\ref{eq:simulation-y}). The oracle is recorded as $\max_{b\in\mathcal{B}}\boldsymbol{Y}_{30}^b$, meaning the maximum outcome at timestamp $T=30$ after the optimal treatment plan $b^*$ is applied. This set includes $100$ records covering information of $\{\bar{\boldsymbol{X}}_{29},\bar{\boldsymbol{A}}_{25},\bar{\boldsymbol{Y}}_{25},\boldsymbol{Y}^*_{30}\}$. The evaluation results of model's early decision making ability compared with the oracle are visualized in Figure~\ref{fig:decision-data}.

\textbf{Real-world applications.} Our model is capable of forecasting future outcomes over multiple steps ahead and identifying the optimal treatments through exhaustive computation. This capability endows it with broad applicability and significant potential for generalization. For example, in the field of autonomous driving, our model can predict the future states of the road environment, such as the trajectories of vehicles and pedestrians, as well as changes in traffic signals. This allows it to plan the optimal driving route in advance. In the medical field, the model can predict the progression of a patient's condition and calculate the best treatment plan ahead of time. In the economic and financial sectors, the model can be used to forecast market trends and changes in economic indicators, thereby assisting investors in devising optimal investment strategies.

\begin{figure}[h]
    \begin{minipage}[h]{0.45\textwidth}
    \centering
    \includegraphics[width=\linewidth]{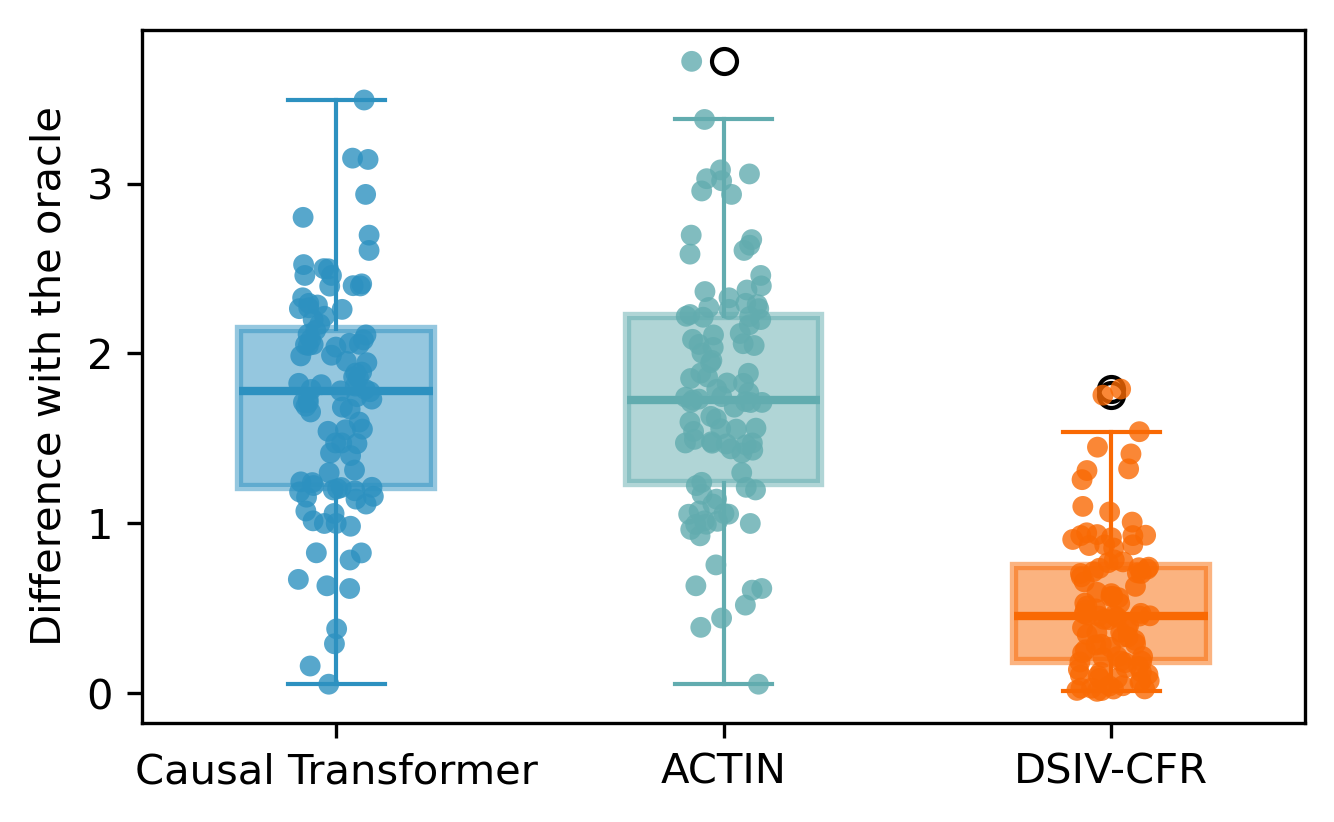}
    \end{minipage}
    \hfill 
    \begin{minipage}[t]{0.48\textwidth}
        \centering
        \begin{tabular}{ccccc}       
            \multicolumn{5}{c}{\small \textbf{Statistics of policy making results compared to oracle}} \\
            \addlinespace[0.5em]
            \toprule
            & \textbf{Min} & \textbf{Max} & \textbf{Avg} & \textbf{Std} \\
            \midrule
            CT & $0.054$ & $3.491$ & $1.722$ & $0.657$ \\
            ACTIN & $0.054$ & $3.719$ & $1.759$ & $0.692$ \\
            \midrule
            \textbf{DSIV-CFR} & $\boldsymbol{0.012}$ & $\boldsymbol{1.791}$ & $\boldsymbol{0.529}$ & $\boldsymbol{0.407}$ \\
            \bottomrule
            \addlinespace[0.5em]
\multicolumn{5}{l}{\footnotesize * Lower = better (best in bold)} \\
        \end{tabular}
    \end{minipage}
    \caption{Results of decision making $5$ steps ahead. It is a detailed version of Figure~\ref{fig:decision}. Visualization is given on the left, where values on the vertical axis represent the differences between oracle and the corresponding results of predicted optimal treatments (lower=better). Detailed statistics are reported on the right.}
     \label{fig:decision-data}
\end{figure}

\newpage
\section{Time complexity}\label{app:time}

When conducting the experiment of one-step-ahead outcome prediction, we also record the training time of each method included in evaluation. Results are shown in Table~\ref{tab:time}. The larger sample size, longer time series, and higher feature dimensionality could require more time for training. In addition, $\mathcal{L}_{MI}$ and $\mathcal{L}_{adv}$ consume much longer time than $\mathcal{L}_{MSE}$, but they play an important role in improving the model performance.

\begin{table*}[h]
\caption{Running time (minutes) of one-step-ahead prediction.}
\label{tab:time}
\vskip 0.1in
\begin{center}
\begin{small}
\begin{tabular}{ccccc}
\toprule
\textbf{Method} & \textbf{Simulation} & \textbf{Tumor} & \textbf{Cryptocurrency} & \textbf{MIMIC-\uppercase\expandafter{\romannumeral 3}} \\
\midrule
Time Series Deconfounder & $16.55$ & $3.200$ & $0.200$ & $2.450$ \\
Causal Transformer & $15.27$ & $2.100$ & $0.417$ & $1.970$ \\
ACTIN & $17.05$ & $1.633$ & $0.550$ & $0.267$ \\
ORL & $13.78$ & $3.100$ & $0.333$ & $2.800$ \\
Deep LTMLE & $9.267$ & $3.350$ & $0.183$ & $2.217$ \\ 
\midrule
\textbf{DSIV-CFR} & $27.87$ & $25.75$ & $1.317$ & $10.30$ \\
\bottomrule
\end{tabular}
\end{small}
\end{center}
\vskip 0.2in
\end{table*}

\section{Experimental results without unmeasured confounders}

Many relevant methods do not take into account the impact of unobserved confounders. However, in the evaluation data used in Table~\ref{tab:main-exp}, there exists the influence of $\boldsymbol{U}$ on $\boldsymbol{Y}$. Therefore, we set $U$ as observable, satisfying the unconfoundedness assumption, and re-evaluated the performance of the baselines. Results are reported in Table~\ref{tab:baseline-up}. If $U$ contains a significant amount of important information, it can be beneficial for improving the performance of outcome prediction. However, if it contains more noise, it may instead lead to a decline in performance.

\begin{table}[h]
\caption{Results of baselines for one-step-ahead prediction without unmeasured confounders.}
\label{tab:baseline-up}
\vskip 0.1in
\begin{center}
\begin{small}
\begin{tabular}{cccc}
\toprule
\textbf{Method} & \textbf{Input} & \textbf{Simulation} & \textbf{Tumor} \\
\midrule
Time Series Deconfounder & $\boldsymbol{X}$ & $0.930\pm0.101$ & $0.753\pm0.042$\\
\midrule
\multirow{2}{*}{Causal Transformer} & $\boldsymbol{X}$ & $1.256\pm0.081$ & $0.716\pm0.003$ \\
& $\{\boldsymbol{X},\boldsymbol{U}\}$ & $0.804\pm0.007$ & $0.703\pm0.013$  \\
\midrule
\multirow{2}{*}{ACTIN} & $\boldsymbol{X}$ & $1.481\pm0.129$ & $1.041\pm0.002$ \\
& $\{\boldsymbol{X},\boldsymbol{U}\}$ & $1.355\pm0.152$ & $1.049\pm0.003$  \\
\midrule
\multirow{2}{*}{ORL} & $\boldsymbol{X}$ & $0.798\pm0.115$ & $0.347\pm0.010$ \\
& $\{\boldsymbol{X},\boldsymbol{U}\}$ & $0.788\pm0.109$ & $0.351\pm0.007$   \\
\midrule
\multirow{2}{*}{Deep LTMLE} & $\boldsymbol{X}$ & $0.531\pm0.077$ & $0.202\pm0.074$ \\
& $\{\boldsymbol{X},\boldsymbol{U}\}$ & $0.467\pm0.041$ & $0.179\pm0.073$ \\ 
\midrule
\textbf{DSIV-CFR} & $\boldsymbol{X}$ & $\textbf{0.105}\boldsymbol{\pm}\textbf{0.017}$ & $\textbf{0.047}\boldsymbol{\pm}\textbf{0.003}$ \\
\bottomrule
\addlinespace[0.8em]
\multicolumn{3}{l}{\footnotesize * Lower = better (best in bold)} \\
\end{tabular}
\end{small}
\end{center}
\end{table}


\end{document}